\theoremstyle{plain}
\newtheorem{theorem}{Theorem}[section]
\newtheorem{proposition}[theorem]{Proposition}
\theoremstyle{definition}
\theoremstyle{remark}
\newcommand{\argmax}{\mathop{\rm arg~max}\limits}
\newcommand{\textbfit}[1]{\textbf{\textit{#1}}}
\newcommand{\mE}{\mathbb{E}}
\newcommand{\mV}{\mathbb{V}}
\newcommand{\mP}{\mathbb{P}}
\newcommand{\calD}{\mathcal{D}}
\newcommand{\calDtr}{\calD_{tr}}
\newcommand{\calDval}{\calD_{val}}
\newcommand{\calX}{\mathcal{X}}
\newcommand{\calA}{\mathcal{A}}
\newcommand{\calS}{\mathcal{S}}
\newcommand{\trueV}{V(\pi_e)}
\newcommand{\ips}{\hat{V}_{\mathrm{IPS}} (\pi_e; \calD)}
\newcommand{\ipssimple}{\hat{V}_{\mathrm{IPS}}}
\newcommand{\snips}{\hat{V}_{\mathrm{SNIPS}} (\pi_e; \calD)}
\newcommand{\dr}{\hat{V}_{\mathrm{DR}} (\pi_e; \calD, \hat{\mu})}
\newcommand{\ipsdiff}{\Delta \hat{V}_{\mathrm{IPS}}\left(\pi_{1}, \pi_{2}\right)}
\title{
Hyperparameter Optimization Can Even be Harmful\\ in Off-Policy Learning and How to Deal with It
}
\author{
Yuta Saito$^1$
\and
Masahiro Nomura$^2$
\affiliations
$^1$Cornell University,
$^2$CyberAgent, Inc.
\emails
ys552@cornell.edu,
nomura\_masahiro@cyberagent.co.jp
}
\begin{document}

\maketitle

\begin{abstract}
There has been a growing interest in off-policy evaluation in the literature such as recommender systems and personalized medicine. We have so far seen significant progress in developing estimators aimed at accurately estimating the effectiveness of counterfactual policies based on biased logged data. However, there are many cases where those estimators are used not only to evaluate the value of decision making policies but also to search for the best hyperparameters from a large candidate space. This work explores the latter \textbfit{hyperparameter optimization (HPO)} task for \textit{off-policy learning}. We empirically show that naively applying an unbiased estimator of the generalization performance as a surrogate objective in HPO can cause an unexpected failure, merely pursuing hyperparameters whose generalization performance is greatly overestimated. We then propose simple and computationally efficient corrections to the typical HPO procedure to deal with the aforementioned issues simultaneously. Empirical investigations demonstrate the effectiveness of our proposed HPO algorithm in situations where the typical procedure fails severely.
\end{abstract}

\section{Introduction}
Interactive decision making systems, such as recommender systems, produce logged data valuable for optimizing future decision making.
For example, the logs of an e-commerce recommender system record which product was recommended and whether the users purchased it, giving the system designer a rich logged dataset useful for evaluating and improving the decision making quality. 
This type of historical data is often called \textit{logged bandit data} and is one of the most ubiquitous forms of data available in many real-life applications~\cite{swaminathan2015batch,su2020doubly,kiyohara2021accelerating,saito2024potec}.
 
\textit{Off-Policy Learning (OPL)} aims to train a new decision making policy using only the logged bandit data. 
OPL is useful in that it can improve the decision making system continuously in a batch manner without requiring a risky exploration.
Owing to the ubiquity of logged bandit data in the real-world, significant attention has been paid to OPL of contextual bandits~\cite{strehl2010learning,swaminathan2015batch,swaminathan2015self,wang2016optimal,kallus2021optimal,kiyohara2023towards,kiyohara2024off}.

The fundamental problem in OPL is that the outcome is only observed for the action chosen by the system in the past. 
Thus, estimating the generalization performance of a policy is non-trivial because we cannot naively apply the empirical risk as done in typical supervised machine learning (ML).
Therefore, a variety of estimators have been developed in the field of off-policy evaluation (OPE), such as Inverse Propensity Score (IPS)~\cite{precup2000eligibility} and Doubly Robust (DR)~\cite{dudik2014doubly}.
Then, a feasible approach to OPL is to maximize one such estimator as a surrogate objective using only the logged data. Hyperparameter optimization (HPO) can also be performed based on one of the estimators on a validation set of the logged data~\cite{paine2020hyperparameter}.

In this study, we investigate how well automatic HPO algorithms work for OPL using only the available logged data. In particular, we empirically find two critical issues in HPO that have yet to be investigated in the literature, but can have a significant adverse impact on the effectiveness of the OPL pipeline. The first issue is \textbfit{optimistic bias}, which implies that the hyperparameter values selected by an HPO procedure are often the ones whose performance is greatly overestimated. In HPO, we often use an unbiased estimator as a strategy to optimize the generalization performance (primary objective) using only validation data. The problem is that, when optimizing the validation performance as a surrogate objective, HPO can identify a set of hyperparameters whose validation performance looks good but its generalization performance is detrimental. As a result, the typical HPO procedure often produces a highly sub-optimal solution, even with an unbiased estimator of the generalization performance. The second issue is \textbfit{unsafe behavior}, which suggests that the typical HPO procedure can output a solution, which \textit{underperforms} the logging (data collection) policy, even when we set the logging policy as an initial solution. This is problematic because a logging policy is often a baseline policy to improve upon in OPL. If an HPO procedure aggravates the performance of the logging policy, there is no need to implement it in practice.

After formulating the problem in Section~\ref{sec:setup}, Section~\ref{sec:analysis} provides clear empirical evidence of optimistic bias and unsafe behavior. We observe these phenomena even when we use an unbiased surrogate objective and a popular adaptive HPO algorithm. We also explain these observations theoretically, demonstrating that ignoring the fact that HPO optimizes the validation performance as a surrogate of the generalization performance can lead to a worse regret of HPO algorithms. More specifically, we identify that a heavy-tailed distribution of overestimation bias during HPO can cause an unexpected gap between the generalization and validation regret. These empirical and theoretical observations result in our proposed corrections to the typical HPO procedure, which we describe in Section~\ref{sec:proposal}. Finally, Section~\ref{sec:experiment} conducts comprehensive experiments and demonstrates that our simple corrections can deal with the aforementioned issues and improve the typical procedure, particularly for cases where the typical procedure becomes unsafe and underperforms the logging policy.\footnote{Appendix B provides a comprehensive survey of related work.}

\section{Preliminaries}
\label{sec:setup}
We use $x \in \calX$ to denote a context vector and $a \in \calA$ to denote a (discrete) action such as a playlist recommendation in a music streaming service. Let $r \in [0, r_{\mathrm{max}}]$ denote a reward variable, which is sampled identically and independently from an unknown conditional distribution $p (r | x, a)$. A decision making policy is modeled as a distribution over the action space, i.e., $\pi: \calX \rightarrow \Delta(\calA)$ where $\Delta(\cdot)$ is a probability simplex. We can then represent the probability of action $a$ being taken by policy $\pi$ given context $x$ as $\pi (a | x)$.

\subsection{Off-Policy Evaluation and Learning}
In OPE, we are given logged bandit data $\calD := \{(x_i,a_i,r_i)\}_{i=1}^n$ consisting of $n$ independent draws from the \textit{logging policy} $\pi_0$. Using this logged dataset, OPE aims to estimate the \textit{generalization} performance of a given \textit{evaluation policy} $\pi_e$, which is often different from $\pi_0$: 
\begin{align}
    \trueV := \mE_{(x,a,r) \sim p(x) \pi_e (a | x) p(r | x, a)} [r]. \label{eq:policy_value}
\end{align}

This is the ground-truth performance of the evaluation policy when deployed in an environment of interest. OPE uses an estimator $\hat{V}$ to estimate $\trueV$ based only on $\calD$ as $\trueV \approx \hat{V} (\pi_e; \calD)$. A typical choice of $\hat{V}$ is IPS:
\begin{align*}
    \ips := \frac{1}{n} \sum_{i=1}^n \frac{\pi_e(a_i \,|\, x_i)}{\pi_0 (a_i \,|\, x_i)} r_i, 
\end{align*}
where $\pi_e(a_i | x_i)/\pi_0(a_i | x_i)$ is called the importance weight. Under some assumptions for identification such as full support ($ \pi_e(a|x) > 0 \rightarrow \pi_0(a|x) > 0, \; \forall (x,a)$), IPS provides an unbiased estimate of the generalization policy performance, i.e., $\mE_{\calD} [\ips] = \trueV $. Beyond IPS, significant efforts have been made to enable a more accurate OPE from the logged data~\cite{dudik2014doubly,wang2016optimal,su2020doubly,saito2023off}.

In OPL, we aim to learn an optimal decision making policy $\pi^* := \argmax_{\pi} V(\pi)$ from the logged data. As in supervised ML, we cannot directly use the generalization policy performance. Instead, we use its estimator as a surrogate:
\begin{align*}
    \hat{\pi} = \argmax_{\pi \in \Pi} \; \hat{V}(\pi; \calD) - \lambda \cdot \mathcal{R} (\pi),
\end{align*}
where $\Pi$ is a policy class, which might be a linear class~\cite{swaminathan2015batch} or deep neural nets~\cite{joachims2018deep}. $\mathcal{R}(\cdot)$ regularizes the complexity of the policy $\pi$, and $\lambda (\ge 0)$ is a hyperparamter that controls the effect of regularization.

\begin{algorithm}[t]
\caption{Typical HPO with IPS as a surrogate (\textbf{Baseline})}
\label{algo:naive_hpo}
\begin{algorithmic}[1]
\REQUIRE $A$, $\Theta$, $T$, $\calDtr$, $\calDval$
\ENSURE $\hat{\theta}$
\STATE $\pi^* \leftarrow \pi_0$ , $\calS_0 \leftarrow \emptyset$
\FOR{$t = 1,2, \ldots, T$}
    \STATE $\theta_t \leftarrow A(\theta \,|\, \calS_{t-1}) $  \hfill \textit{\small // sample candidate hyperparameters}
    \STATE $\pi_t \leftarrow \hat{\pi}(\cdot \,|\, \theta_t, \calDtr) $ \hfill \textit{\small // train a policy (lower-level)}
    \IF{$\ipssimple \left(\pi_t; \calDval\right) > \ipssimple \left(\pi^*; \calDval\right)$}
        \STATE $\hat{\theta} \leftarrow \theta_t$, $\pi^* \leftarrow \pi_t$ \hfill \textit{\small // update the solution}
    \ENDIF
    \STATE $\calS_t \leftarrow \calS_{t-1} \cup \{(\theta_t, \ipssimple(\pi_t;\calDval))\}$ \hfill \textit{\small // store the result}
\ENDFOR
\end{algorithmic}
\end{algorithm}

\subsection{Hyperparameter Optimization}
OPE involves many hyperparameters to be properly tuned from those defining the policy class $\Pi$ to the regularization parameter $\lambda$. In a typical HPO procedure for OPL, we first split the original logged bandit data $\calD$ into training ($\calDtr$) and validation ($\calDval$) sets. Then, we wish to solve the following bi-level optimization:
\begin{align}
    \theta^* := \argmax_{\theta \in \Theta} \; V \big(\hat{\pi}(\cdot; \theta, \calDtr) \big), \label{eq:opt_hypara}
\end{align}
where $\Theta$ is a pre-defined hyperparamter search space. $\hat{\pi}(\cdot; \theta, \calDtr)$ is a policy parameterized by a set of hyperparameters $\theta$. The model parameter of $\hat{\pi}(\cdot; \theta, \calDtr)$ is trained on the training set $\calDtr$ (lower-level optimization). The problem here is that the generalization performance of $\hat{\pi}(\cdot; \theta, \calDtr)$ is unknown and needs to be estimated. A feasible HPO procedure based on an estimated policy performance is:
\begin{align}
    \hat{\theta}(\calDval) := \argmax_{\theta \in \Theta} \; \hat{V} \big(\hat{\pi}(\cdot; \theta, \calDtr); \calDval \big), \label{eq:hpo}
\end{align}
where the generalization performance of $\hat{\pi}(\cdot; \theta, \calDtr)$ is estimated by an estimator $\hat{V}$ on the validation set $\calDval$.\footnote{For brevity of notation, we sometimes use $V(\theta)$ and $\hat{V}(\theta;\calD)$ to denote the generalization and validation performances of the policy induced by $\theta$.} A common choice of $\hat{V}$ is an unbiased estimator that satisfies $\mE [ \hat{V} \left(\pi; \calDval \right) ] = V(\pi), \forall \pi \in \Pi$ such as IPS. Then, one can apply grid search, random search~\cite{bergstra2012random}, or adaptive methods such as tree-structured Parzen estimator (TPE)~\cite{bergstra2011algorithms} to solve the higher-level optimization in Eq.~\eqref{eq:hpo} efficiently. Algorithm~\ref{algo:naive_hpo} describes this typical HPO procedure for OPL, which starts from the logging policy $\pi_0$ as its initial solution and adaptively samples promising hyperparameters via an arbitrary HPO algorithm (denoted here as $A$)~\cite{tang2021model}.

\begin{figure*}[ht]
\includegraphics[width=16.5cm]{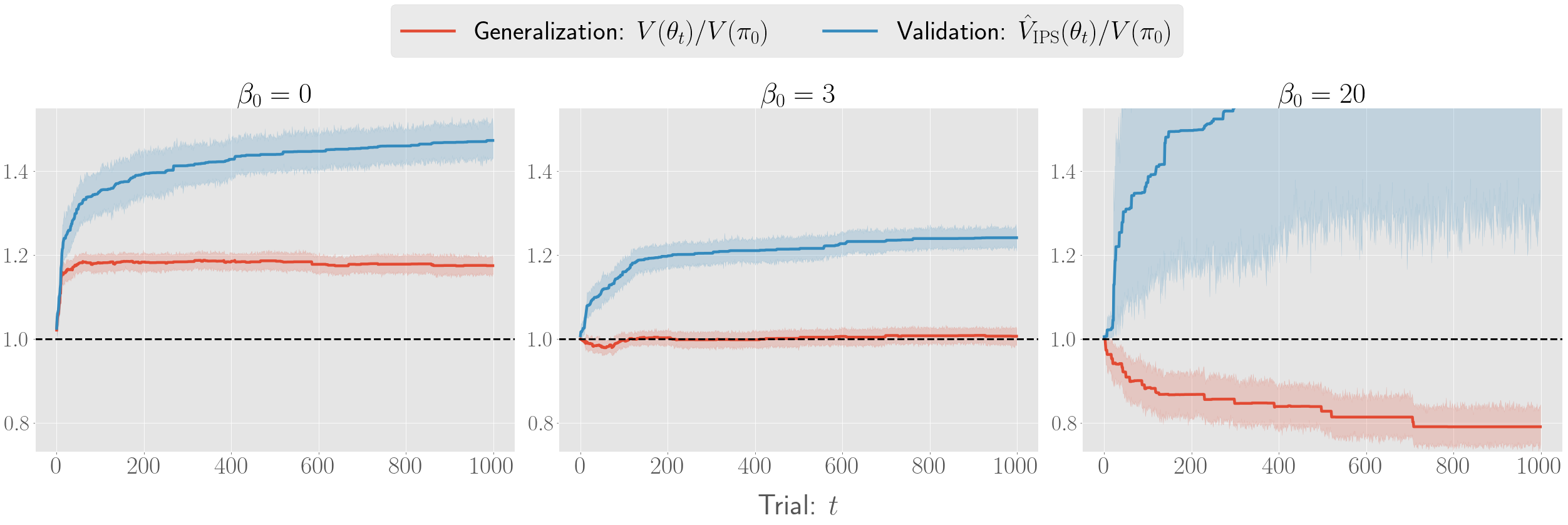}
\vspace{-2mm}
\caption{Empirical Evidence of Optimistic Bias and Unsafe Behavior in HPO for OPL (w/ TPE). The results are averaged over 25 runs with different seeds and then normalized by $V(\pi_0)$. The shaded regions indicate 95\% confidence intervals.}
\label{fig:optimistic_bias_tpe}
\end{figure*}

\section{Unexpected Failure in HPO for OPL}\label{sec:analysis}
This section studies the effectiveness of HPO when applied to OPL from both empirical and theoretical perspectives.

\subsection{Empirical Analysis}\label{sec:empirical_analysis}
First, we conduct a synthetic experiment and provide empirical evidence of surprising failure of HPO in OPL.

\subsubsection{Synthetic Data.}
Our empirical analysis is based on \textit{OpenBanditPipeline} (OBP)\footnote{https://github.com/st-tech/zr-obp}, an open-source toolkit for OPE and OPL, which includes synthetic data generation modules and a range of estimators~\cite{saito2021open}. We synthesize context vectors $x$ by sampling them from a 10-dimensional standard normal distribution. We then set $|\calA|=10$, where each action $a \in \calA$ is characterized by a 10-dimensional representation vector $e_a$. The reward function $\mu(x,a):=\mE[r \,|\, x,a]$ is defined as:
\begin{align}
    \mu(x,a) = \sigma \left( x^{\top} M e_a + \eta_x^{\top} x +  \eta_a^{\top} e_a \right), \label{eq:synthetic_reward}
\end{align}
where $\sigma(z):= 1/(1+\exp(-z))$ is the sigmoid function. $M$, $\eta_x$, and $\eta_a$ are parameter matrices or vectors for defining the synthetic reward function. These parameters are sampled from a uniform distribution with range $[-1,1]$. After generating the synthetic reward function, we sample binary rewards from a Bernoulli distribution with parameter $\mu(x,a)$.

We then define the logging policy $\pi_0$ by applying the softmax function to the reward function $\mu(x,a)$ as follows.
\begin{align}
    \pi_0(a \,|\, x) =  \frac{\exp( \beta_0 \cdot \mu(x,a))}{ \sum_{a' \in \calA} \exp( \beta_0 \cdot \mu(x,a')) } , \label{eq:synthetic_logging}
\end{align}
where $\beta_0$ is an inverse temperature parameter to control the optimality and entropy of the logging policy.
A large positive value of $\beta_0$ leads to a near-deterministic and near-optimal logging policy. When $\beta_0=0$, $\pi_0$ is uniform.

\subsubsection{Policy Class and HPO Algorithms.}
To train a new policy $\pi$ from only the logged data, we first estimate $\mu(x,a)$ by a supervised ML method, where the resulting estimator is denoted as $\hat{\mu}(x,a;\calDtr)$. We then form a stochastic policy by applying the softmax rule as:
\begin{align}
    \pi(a \,|\, x; \theta, \calDtr) =  \frac{\exp( \beta \cdot \hat{\mu}(x,a;\calDtr))}{ \sum_{a' \in \calA} \exp( \beta \cdot\hat{\mu}(x,a';\calDtr)) } , \label{eq:synthetic_eval}
\end{align}
where $\beta$ is an inverse temperature parameter to define a new policy. $\theta$ is a set of hyperparameters, which consists of $\beta$, supervised ML model to construct $\hat{\mu}$, and the hyperparameters of $\hat{\mu}$. The hyperparameter search space $\Theta$ is summarized in Table 1 in Appendix E.

As an HPO algorithm, we use TPE~\cite{bergstra2011algorithms}, which is a popular adaptive method in the HPO community~\cite{akiba2019optuna}. TPE has been shown to work well for HPO of supervised ML, however, whether it also works for OPL has never been thoroughly investigated.

\subsubsection{Observations.}
In this synthetic experiment, we set $\beta_0 \in \{0,3,20\}$ and $|\calDtr|=|\calDval|=1,000$. The number of trials ($T$ in Algorithm~\ref{algo:naive_hpo}) for HPO is set to 1,000.

Figure~\ref{fig:optimistic_bias_tpe} shows the \textbf{validation performance} ($\ipssimple(\pi; \calDval)$; what HPO algorithm maximizes from the logged data) and the \textbf{generalization performance} ($V(\pi)$; the primary objective of OPL) during the HPO procedure. We obtain the following key observations in this experiment.

\begin{enumerate}
    \item \textbf{Optimistic Bias}: For all $\beta_0$, TPE succeeds in maximizing the validation performance, monotonically improving the blue lines. However, there is a substantial gap between validation and generalization, and the validation performance becomes an extremely optimistic proxy of the generalization performance. For example, when $\beta_0=3$, TPE does not bring any impact on the generalization performance, although the validation performance is greatly improved. This result suggests that implementing HPO is indeed a waste of time and resources for this setting. \vspace{0.025in}
    \item \textbf{Unsafe Behavior}: When $\beta_0 = 20$ (where $\pi_0$ is already much better than uniform random), TPE outputs a solution that is significantly worse than the logging policy with respect to the generalization performance. This is problematic, as the solution at the final trial seems to provide a substantial improvement over the logging policy with respect to the unbiased validation performance (blue lines). In reality, we have no access to the generalization performance (red lines), making it impossible to detect this performance degradation, possibly deploying an unsafe policy in the field without even noticing it. 
\end{enumerate}

These observations suggest that optimizing an unbiased surrogate objective is not an ideal strategy and is even harmful in some cases regarding the optimization of the generalization performance. Note that we obtain similar results when random search (RS) is used as an HPO algorithm and DR is used as an OPE estimator as reported in Appendix E. In particular, comparing RS with TPE in terms of the generalization performance, we find that there are no particular differences between the two algorithms for $\beta_0=0,3$. Even more surprisingly, when $\beta_0=20$, TPE is outperformed by RS, even if TPE is better at optimizing the validation performance. These results further suggest that merely optimizing an unbiased surrogate objective is not a suitable approach for optimizing the generalization performance in HPO of OPL.

\subsection{Theoretical Analysis}\label{sec:theoretical_analysis}
Next, we investigate the mechanism causing the somewhat surprising issues observed in the previous section.\footnote{Appendix C provides proofs omitted in the main text.}
First, we explain the phenomena from a statistical perspective.
\begin{proposition} \label{prop:optimistic_bias}
Given that $\hat{V}$ is unbiased, we have the following inequalities. 
\begin{align}
    \mE_{\calD} \big[\hat{V} \big(\hat{\theta}(\calD); \calD \big) \big] 
     \ge V \left( \theta^* \right) \ge \mE_{\calD} \big[ V \big( \hat{\theta}(\calD) \big) \big], \label{eq:optimistic_bias}
\end{align}
where $\mE_{\calD}[\cdot]$ takes expectation over every randomness in the logged data $\calD$, and $\mE_{\calD} [\hat{V} (\hat{\theta}(\calD); \calD) ] -  \mE_{\calD} [ V ( \hat{\theta}(\calD) ) ]$ is the amount of optimistic bias.
\end{proposition}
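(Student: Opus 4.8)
The plan is to prove the two inequalities in \eqref{eq:optimistic_bias} separately, each via the elementary ``winner's curse'' observation that the expectation of a maximum dominates the maximum of the expectations, combined with the unbiasedness of $\hat{V}$. Throughout I would condition on the training split $\calDtr$, so that every policy $\hat{\pi}(\cdot;\theta,\calDtr)$ is a fixed function of $\theta$; consequently $V(\theta)$ is deterministic, the optimizer $\theta^* = \argmax_{\theta\in\Theta} V(\theta)$ of \eqref{eq:opt_hypara} is a fixed element of $\Theta$, and the only randomness left is that of $\calDval = \calD$. The statement in full generality then follows by taking a further expectation over $\calDtr$ and using monotonicity of expectation.

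For the right inequality $V(\theta^*) \ge \mE_{\calD}[V(\hat{\theta}(\calD))]$: since $\theta^*$ maximizes $V(\cdot)$ over $\Theta$ and $\hat{\theta}(\calD) \in \Theta$ for every realization of $\calD$, we have the pointwise bound $V(\hat{\theta}(\calD)) \le \max_{\theta\in\Theta} V(\theta) = V(\theta^*)$. Taking expectation over $\calD$ preserves the inequality, and because $V(\theta^*)$ is non-random given $\calDtr$ this is exactly the claim.

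For the left inequality $\mE_{\calD}[\hat{V}(\hat{\theta}(\calD);\calD)] \ge V(\theta^*)$: by the definition in \eqref{eq:hpo}, $\hat{\theta}(\calD) = \argmax_{\theta\in\Theta}\hat{V}(\theta;\calD)$, so for every realization of $\calD$ and using $\theta^*\in\Theta$ we get $\hat{V}(\hat{\theta}(\calD);\calD) = \max_{\theta\in\Theta}\hat{V}(\theta;\calD) \ge \hat{V}(\theta^*;\calD)$. Taking expectation and invoking unbiasedness, $\mE[\hat{V}(\pi;\calDval)] = V(\pi)$, applied to the \emph{fixed} policy $\pi = \hat{\pi}(\cdot;\theta^*,\calDtr)$, yields $\mE_{\calD}[\hat{V}(\hat{\theta}(\calD);\calD)] \ge \mE_{\calD}[\hat{V}(\theta^*;\calD)] = V(\theta^*)$. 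Chaining the two bounds gives \eqref{eq:optimistic_bias}, and the optimistic-bias quantity $\mE_{\calD}[\hat{V}(\hat{\theta}(\calD);\calD)] - \mE_{\calD}[V(\hat{\theta}(\calD))]$ is then manifestly nonnegative.

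The argument is short and presents no real technical obstacle; the only point requiring care is the bookkeeping of randomness. Specifically, one must keep $\theta^*$ as a (conditionally) deterministic element of $\Theta$ so that it may legitimately be substituted into both $V(\cdot)$ and $\hat{V}(\cdot;\calD)$, and one must apply unbiasedness only to the fixed policy induced by $\theta^*$, never to the data-dependent $\hat{\theta}(\calD)$ --- indeed $\mE_{\calD}[\hat{V}(\hat{\theta}(\calD);\calD)] \neq \mE_{\calD}[V(\hat{\theta}(\calD))]$ precisely because of this selection effect, which is what the proposition quantifies. If one prefers the inequalities stated without conditioning on $\calDtr$, it suffices to wrap every term in $\mE_{\calDtr}$ and appeal to linearity and monotonicity of expectation.
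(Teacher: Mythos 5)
Your proof is correct and follows essentially the same route as the paper's: the left inequality from $\hat{\theta}(\calD)$ maximizing $\hat{V}(\cdot;\calD)$ pointwise plus unbiasedness applied to the fixed $\theta^*$, and the right inequality from $\theta^*$ maximizing $V(\cdot)$. Your additional bookkeeping (conditioning on $\calDtr$ so that $\theta^*$ is deterministic before invoking unbiasedness) makes explicit a point the paper only notes in passing, but the argument is the same.
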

Note that, in Eq.~\eqref{eq:opt_hypara}, $V \left( \theta^* \right)$ is defined as the best generalization performance we could achieve with HPO. Thus, the first inequality in Eq.~\eqref{eq:optimistic_bias} suggests that the \textit{validation} performance of the HPO solution $\hat{\theta}(\calDval) $ is better than the best achievable generalization performance in expectation, suggesting that the performance estimation of the HPO solution is optimistic in general. In addition, the second inequality in Eq.~\eqref{eq:optimistic_bias} implies that the \textit{generalization} performance of $\hat{\theta}(\calDval) $ is worse than the best achievable generalization performance in expectation, even though the validation performance of $\hat{\theta}(\calDval) $ is likely to be better. As a result, we will often be disappointed with the performance of the HPO solution $\hat{\theta}$ even with an unbiased surrogate (validation) objective. Overall, Proposition~\ref{prop:optimistic_bias} explains the substantial gap between the blue ($\mE [\hat{V} (\hat{\theta}(\calDval) ) ] $) and red ($ \mE [ V ( \hat{\theta}(\calDval) )] $) lines observed in Figure~\ref{fig:optimistic_bias_tpe}.

\begin{figure}[th]
\centering
\includegraphics[clip, width=7.5cm]{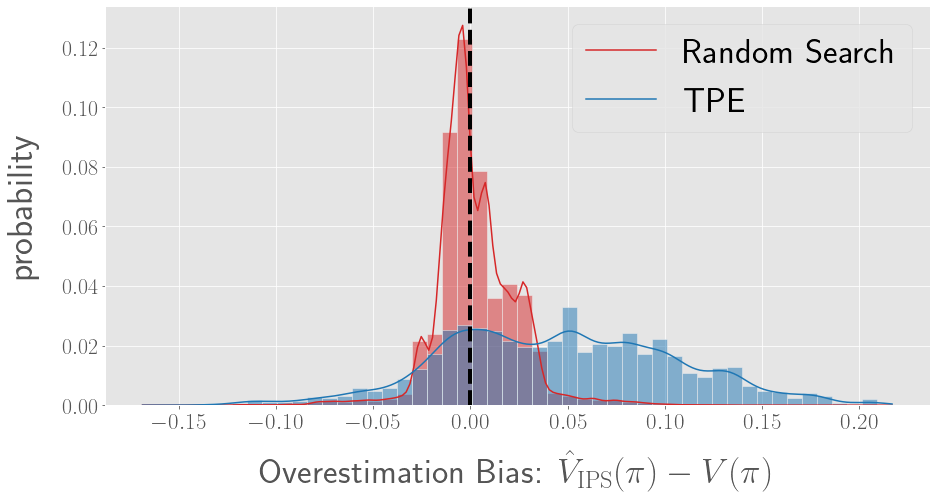}
\vspace{-2mm}
\caption{Distributions of Overestimation Bias ($\beta_0=3$)}
\label{fig:overestimation_bias}
\end{figure}

Next, we analyze ``regret" to understand \textit{what causes the optimistic bias in Proposition~\ref{prop:optimistic_bias} and how we can deal with it}. For this, we define two variants of regret, which measure the difference between the validation or generalization performances of the optimal hyperparameter and HPO solution.
\begin{align}
    r_{gen}(T; A, \calD) & := V \left(\theta^*\right) - V ( \hat{\theta}_{T,A} (\calD) ) \label{eq:generalization_regret},\\
    r_{val}(T; A, \calD) & := \ipssimple (\hat{\theta}^*; \calD) - \ipssimple ( \hat{\theta}_{T,A} (\calD); \calD) \label{eq:validation_regret},
\end{align}
where $\theta^* := \argmax_{\theta \in \Theta} V (\theta)$ is the optimal hyperparameter with respect to the generalization performance. $\hat{\theta}^* = \argmax_{\theta \in \Theta} \ipssimple(\theta; \calD)$ denotes the optimal hyperparameter with respect to the \emph{validation} performance, and $\hat{\theta}_{T,A} (\calD)$ is the solution of Algorithm~\ref{algo:naive_hpo} given budget $T$ and algorithm $A$. We also define the overestimation bias for a specific hyperparameter $\theta$ as $\tau (\theta; \calD) = \ipssimple(\theta; \calD) -  V(\theta)$. Then, the following implies that a \textbf{heavy-tailed distribution of overestimation bias during HPO can produce an unexpected gap between the generalization and validation regret}.
\begin{proposition} \label{prop:regret}
    Given HPO algorithm $A$, budget $T$, and logged data $\calD$, the generalization regret can be written as
    \begin{align}
        r_{gen}(T; A, \calD)  
        &= r_{val}(T; A, \calD) + \Delta \tau(\hat{\theta}_{T,A} (\calD), \theta^*; \calD) + C, \label{eq:regret}
    \end{align}
    where $\Delta \tau(\theta_1, \theta_2; \calD) := \tau(\theta_1; \calD) - \tau(\theta_2; \calD)$, and $C := \ipssimple(\theta^*; \calD) - \ipssimple(\hat{\theta}^*; \calD)$.
\end{proposition}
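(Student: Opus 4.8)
The plan is to read Proposition~\ref{prop:regret} as a purely algebraic identity rather than a probabilistic statement: every quantity appearing on its right-hand side is already expressible through $V$ and $\ipssimple(\cdot;\calD)$ evaluated at just three hyperparameters — the generalization optimum $\theta^*$, the validation optimum $\hat{\theta}^*$, and the HPO output $\hat{\theta}_{T,A}(\calD)$ — so the claimed equation must follow by substituting the definition of the overestimation bias $\tau(\theta;\calD) = \ipssimple(\theta;\calD) - V(\theta)$ and collecting terms. I would therefore start from the right-hand side $r_{val}(T;A,\calD) + \Delta\tau(\hat{\theta}_{T,A}(\calD),\theta^*;\calD) + C$ and expand each of the three summands.

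The key steps, in order, are: (i) unfold $\Delta\tau(\hat{\theta}_{T,A}(\calD),\theta^*;\calD) = \tau(\hat{\theta}_{T,A}(\calD);\calD) - \tau(\theta^*;\calD)$ and replace each $\tau$ by $\ipssimple(\cdot;\calD) - V(\cdot)$, so this term becomes $\ipssimple(\hat{\theta}_{T,A}(\calD);\calD) - V(\hat{\theta}_{T,A}(\calD)) - \ipssimple(\theta^*;\calD) + V(\theta^*)$; (ii) write out $r_{val}(T;A,\calD) = \ipssimple(\hat{\theta}^*;\calD) - \ipssimple(\hat{\theta}_{T,A}(\calD);\calD)$ and $C = \ipssimple(\theta^*;\calD) - \ipssimple(\hat{\theta}^*;\calD)$ directly from their definitions; (iii) sum the three expanded pieces and observe that the six $\ipssimple(\cdot;\calD)$ occurrences — one with each sign at $\hat{\theta}^*$, at $\hat{\theta}_{T,A}(\calD)$, and at $\theta^*$ — cancel in pairs, leaving exactly $V(\theta^*) - V(\hat{\theta}_{T,A}(\calD))$; (iv) recognize this remainder as $r_{gen}(T;A,\calD)$ by definition, completing the identity.

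Since the argument is term-by-term with no inequalities or expectations, there is no genuine technical obstacle; the only point that needs care is keeping the validation-optimal $\hat{\theta}^*$ and the generalization-optimal $\theta^*$ strictly separate throughout the bookkeeping, because it is precisely the fact that $\theta^*$ need not maximize $\ipssimple(\cdot;\calD)$ that forces the residual shift $C = \ipssimple(\theta^*;\calD) - \ipssimple(\hat{\theta}^*;\calD)$ (which is $\le 0$ and depends only on $\calD$, not on $T$ or $A$) to appear and not be dropped. I would close by spelling out the intended reading that the excerpt advertises: an adaptive HPO routine drives $r_{val}(T;A,\calD)$ toward zero, so $r_{gen}$ is governed by $\Delta\tau$, the gap in overestimation bias between the returned hyperparameter and the truly best one; under a heavy-tailed distribution of $\tau(\cdot;\calD)$ over the search space, chasing large $\ipssimple$-values makes $\tau(\hat{\theta}_{T,A}(\calD);\calD)$ typically large while $\tau(\theta^*;\calD)$ has no reason to be, so $\Delta\tau$ — and hence $r_{gen}$ — can blow up even as $r_{val}$ vanishes, which is exactly the behavior seen in \Figref{fig:optimistic_bias_tpe}.
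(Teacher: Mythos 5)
Your proof is correct and is essentially the same argument as the paper's: both are pure algebraic bookkeeping with the definitions of $\tau$, $r_{val}$, $r_{gen}$, and $C$, the only cosmetic difference being that you expand the right-hand side and cancel, while the paper starts from $r_{gen}$ and inserts the same $\pm\ipssimple(\cdot;\calD)$ terms to telescope toward the right-hand side. No gap; the cancellation you describe in step (iii) checks out exactly.
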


Only the first two terms of the RHS in Eq.~\eqref{eq:regret} depend on the HPO solution $\hat{\theta}_{T,A} (\calD)$, and are thus critical for analyzing the HPO performance. The first term $r_{val}$ is the validation regret. Under some mild conditions, we can achieve \textit{no-regret} ($r_{val}(T; A,\calD) = o(1)$) with optimization methods such as GP-UCB~\cite{srinivas2010gaussian}, as we can target the validation performance directly using available data. The second term $\Delta \tau(\hat{\theta}_{T,A} (\calD), \theta^*; \calD)$ is the difference in the extent of overestimation between $\hat{\theta}_{T,A} (\calD)$ and $\theta^*$. When the extent of overestimation of $\hat{\theta}_{T,A} (\calD)$ is larger than that of $\theta^*$, $\Delta \tau(\hat{\theta}_{T,A} (\calD), \theta^*; \calD)$ becomes large. Therefore, Proposition~\ref{prop:regret} suggests that \textbf{the overestimation bias of $\hat{\theta}_{T,A} (\calD)$ can exacerbate the generalization regret of HPO algorithms}. More specifically, if an HPO algorithm is likely to sample many hyperparameters whose performance is overestimated ($\ipssimple(\theta) - V(\theta) > 0$) and the overestimation bias has a heavy-tailed distribution, the second term of Eq.~\eqref{eq:regret} tends to become large, so does the generalization regret $r_{gen}$.
Given this regret analysis, we investigate the distributions of overestimation bias observed in the empirical analysis in Figure~\ref{fig:overestimation_bias}.
This figure implies that TPE more frequently samples hyperparameters incurring a large overestimation bias than RS. According to Proposition~\ref{prop:regret}, this is why we do not find the advantage of TPE with respect to the generalization performance.
RS has a worse validation regret than TPE, while overestimation bias of RS is not very problematic compared to TPE. As a result, RS performs similarly to or slightly better than TPE in terms of the generalization performance.
In this way, the heavy-tailed distribution of overestimation bias makes the generalization regret of HPO algorithms (in particular TPE) worse than its validation regret, resulting in optimistic bias and possibly unsafe behavior.

\section{How Should We Deal with the Issues?}\label{sec:proposal}
In this section, we propose two simple corrections, namely (i) \textbf{conservative surrogate objective} and (ii) \textbf{adaptive imitation regularization}, to deal with the critical issues in HPO. We also describe the resulting HPO procedure, which we call \textbf{\textit{Conservative and Imitation-Regularized HPO (CIR-HPO)}}.

\subsection{Conservative Surrogate Objective (CSO)}
First, we address the heavy-tailed distribution of overestimation bias ($\ipssimple (\pi) - V(\pi)$) during HPO, as suggested in Figure~\ref{fig:overestimation_bias}. Proposition~\ref{prop:regret} implies that the overestimation of the value of hyperparameters sampled during HPO can exacerbate the generalization regret of an HPO algorithm. To deal with this issue, we introduce \textbfit{conservative surrogate objective}, which penalizes the validation performance of hyperparameters whose performance has a large uncertainty to avoid the issue of overestimation bias during HPO. Specifically, we propose to use a high probability lower bound of the generalization performance (denoted as $\hat{V}_{-}(\cdot)$) as an alternative surrogate objective, which is given as: $\mP \big( V (\pi) \ge \hat{V}_{-} (\pi; \calD, \delta)  \big) \ge 1 - \delta$ where $\delta \in (0,1)$ specifies a confidence level.

A prevalent strategy to construct $\hat{V}_{-}(\cdot)$ in OPE is to apply a concentration inequality such as Hoeffding and Bernstein~\cite{thomas2015confidence,thomas2015high}. A problem is that these inequalities are often overly conservative as they make no assumptions about underlying distribution. Thus, we use an alternative strategy to construct $\hat{V}_{-}(\cdot)$ based on the Student's t-distribution as follows.
\begin{align}
    \hat{V}^{t}_{-} (\pi; \calD, \delta) := \ipssimple(\pi; \calD) - t_{1-\delta, \nu} \sqrt{\frac{\mV_n (\ipssimple (\pi; \calD)) }{n-1}} \label{eq:t_lower_bound},
\end{align}
where $t_{1-\delta, \nu}$ is the T-value given confidence level $\delta$ and degrees of freedom $\nu$.

The upside of Eq.~\eqref{eq:t_lower_bound} is that it produces a tighter lower bound than aforementioned concentration inequalities. This is because Eq.~\eqref{eq:t_lower_bound} introduces the additional assumption that the mean of importance weighted rewards $(\pi/\pi_0)r $ is normally distributed. This assumption is reasonable with growing data sizes. However, $(\pi/\pi_0)r $ often follows a distribution with heavy upper tails, which may make the assumption invalid in a small sample setting. Nonetheless, Appendix E empirically verifies that Eq.~\eqref{eq:t_lower_bound} is \textit{reasonably tight} compared with other popular concentration inequalities.

\begin{algorithm}[t]
\caption{Conservative and Imitation-Regularized HPO}
\label{algo:ss_hpo_ht}
\begin{algorithmic}[1]
\REQUIRE $A$, $\delta$, $\gamma$, $\Theta$, $T$, $\pi_0$, $\calDtr$, $\calDval$
\ENSURE $\hat{\theta}$ 
\STATE $\calS_0 \leftarrow \emptyset$
\FOR{$t = 1,2, \ldots, T$ }
    \STATE $\theta_t \leftarrow A(\theta \,|\, \calS_{t-1}) $ \hfill \textit{\small // sample candidate hyperparameters}
    \STATE $\hat{\pi}_t \leftarrow \hat{\pi}(\cdot \,|\, \theta_t, \calDtr) $ \hfill \textit{\small // train a policy (lower-level)}
    \STATE $\pi_t \leftarrow (1-\alpha_t) \cdot \hat{\pi}_t + \alpha_t \cdot \pi_0 $ \hfill \textit{\small // regularization (Eq.~\eqref{eq:adaptive_regularization_param})}
    \IF{ $ \hat{V}^t_{-}(\pi_t; \calDval, \delta) \ge  \hat{V}^t_{-}(\pi^*; \calDval, \delta) $ }
        \STATE $\hat{\theta} \leftarrow \theta_t$, $\pi^* \leftarrow \pi_t$ \hfill \textit{\small // update the solution}
    \ENDIF
    \STATE $\calS_t \leftarrow \calS_{t-1} \cup \{(\theta_t, \hat{V}^t_{-}(\pi_t; \calDval, \delta))\}$ \textit{\small // store the result}
\ENDFOR
\end{algorithmic}
\end{algorithm}

\begin{figure*}[ht]
\includegraphics[width=16.5cm]{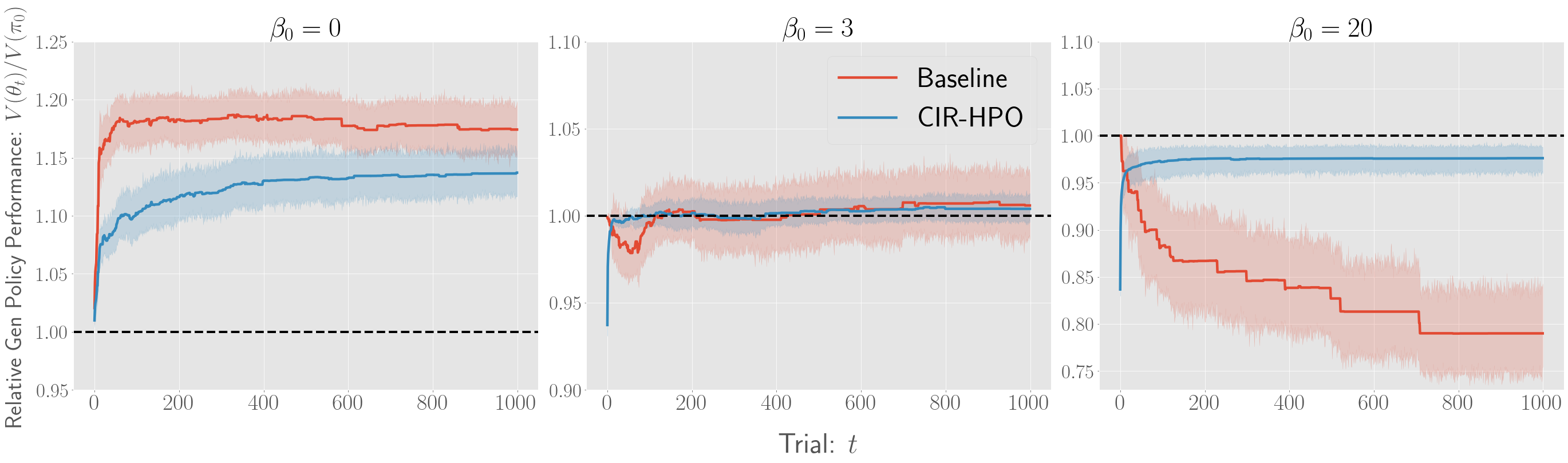}
\vspace{-2mm}
\caption{Comparing CIR-HPO (our proposal) and Baseline by their generalization performance. The results are averaged over 25 runs with different seeds and then normalized by $V(\pi_0)$. The shaded regions indicate 95\% confidence intervals.}
\label{fig:baseline_vs_chpo}
\end{figure*}

\begin{figure}[ht]
\centering
\includegraphics[width=8cm]{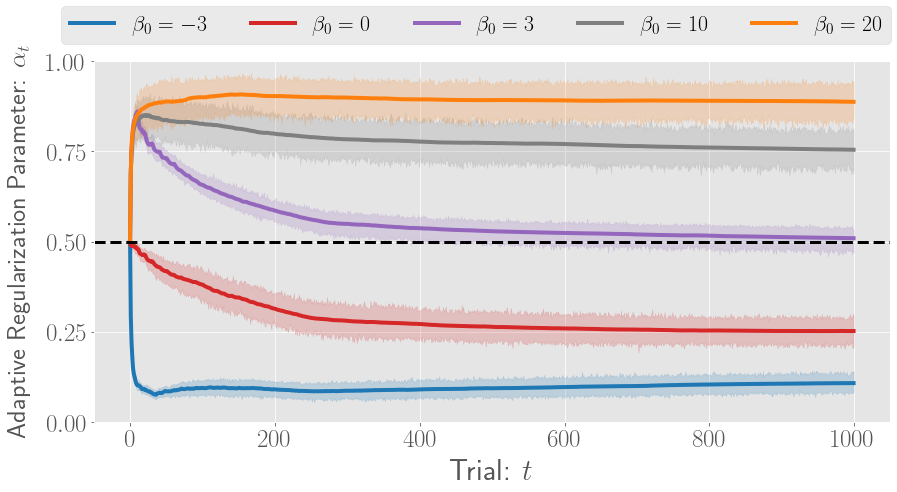}
\vspace{-2mm}
\caption{Behavior of adaptive regularization parameter ($\alpha_t$) of CIR-HPO with varying values of $\beta_0 \in \{-3,0,3,10,20\}$.}
\label{fig:air}
\end{figure}
\subsection{Adaptive Imitation Regularization (AIR)}
The second technique we propose is \textbfit{adaptive imitation regularization}, which tackles the unsafe behavior of the typical procedure. The issue of unsafe behavior suggests that, if logging policy $\pi_0$ is better than uniform random or is near-optimal, Algorithm~\ref{algo:naive_hpo} can produce a solution whose performance is much worse than that of the logging policy. 
Avoiding this problem is non-trivial, because we do not have access to the generalization performance and do not know the optimality of the logging policy in practice. For example, simply setting $\pi_0$ as an initial solution does not solve the issue at all, as suggested in Section~\ref{sec:empirical_analysis}. An instant idea might be to \textit{imitate} the logging policy to some extent:
\begin{align}
    \pi_t (a | x; \alpha, \theta_t, \calDtr) = (1-\alpha) \hat{\pi}(a | x; \theta_t, \calDtr) + \alpha \pi_0 (a | x),
    \label{eq:static_regularization}
\end{align}
where $\theta_t$ is a set of hyperparameters sampled at the $t$-th trial. $\alpha \, (\in [0,1])$ is a regularization parameter, which mixes the policy induced by $\theta_t$ and $\pi_0$ to construct a policy to evaluate. A large value of $\alpha$ makes $\pi_t$ closer to the logging policy, possibly avoiding the unsafe behavior. However, if the logging policy is detrimental, we should use a small $\alpha$ so that we can avoid an unnecessary performance degradation. So, a natural question to ask here is: \textit{how should we set the regularization parameter $\alpha$?} Again, this problem is non-trivial, as the optimality of the logging policy is unknown when performing HPO.

To overcome this difficulty in correctly setting $\alpha$, we propose \textit{adaptively tuning this parameter over the course of HPO}.
Based on the previous discussion, we should apply a strong regularization if $\pi_0$ performs well, otherwise we should not imitate $\pi_0$.
A key idea here is that we can reason about the optimality of the logging policy by comparing it with solutions sampled during HPO, i.e., $\{ \hat{\pi}(a | x; \theta_t, \calDtr) \}_{t=1}^T$. If most of the sampled solutions underperform $\pi_0$, we can infer that the logging policy is well-performing. To make a valid comparison between the sampled solutions and the logging policy, we apply a Student’s t-test based on the following T-value.
\begin{align*}
    T(\pi_1, \pi_2) := \frac{|\ipsdiff|}{\sqrt{\hat{\mV}_n (\ipsdiff)/(n-1)}}.
\end{align*}
where $\ipsdiff:= \ipssimple(\pi_1) - \ipssimple(\pi_2)$ is the performance difference between the two policies estimated by IPS.
Given a null hypothesis ($\ipsdiff=0$) and a normality assumption, $T(\pi_1, \pi_2)$ follows a t-distribution with $\nu$ degrees of freedom. 
We then calculate the optimality score of $\pi_0$ at the $t$-th trial as follows.
\begin{align}
    s_t = 
        \begin{cases}
        1 & (T(\pi_0, \pi_t) \ge t_{1-\delta/2, \nu}  \textit{ and } \Delta \ipssimple (\pi_0, \pi_t) \ge 0)\\
        - 1 & (T(\pi_0, \pi_t) \ge t_{1-\delta/2, \nu}  \textit{ and }  \Delta \ipssimple (\pi_0, \pi_t) < 0)\\
        0 & (\textit{otherwise, i.e., } T(\pi_0, \pi_t) < t_{1-\delta/2, \nu})
        \end{cases}
\end{align}
$s_t$ indicates whether $\pi_0$ is better or worse than $\pi_t$ in a significant level.
If $\pi_0$ is better than $\pi_t$, then $s_t = 1$. Instead, $s_t=-1$ if $\pi_0$ is tested to be worse.
If there is no significant difference between $\pi_0$ and $\pi_t$, the score is zero.

Using the sequence of scores up to the $t$-th trial, i.e., $\{ s_{t'} \}_{t'=1}^t $, we define \textit{adaptive regularization parameter} as:
\begin{align}
    \alpha_t 
    &:= \alpha_{init} + (1-\alpha_{init}) \cdot  \left( \frac{t}{T} \right)^{\gamma} \cdot \frac{\sum_{t'=1}^t s_{t'}}{t}
    \label{eq:adaptive_regularization_param}
\end{align}
where $\alpha_{init} \in [0,1]$ is an initial regularization parameter and $\gamma \, (> 0)$ is a scheduling parameter for adaptive regularization. 
For example, suppose that $s_t=1, \forall t =1,2,\ldots,T$, meaning that $\pi_0$ is always better than $\pi_t$ in a significant level.
Then, following Eq.~\eqref{eq:adaptive_regularization_param}, $\alpha_T = 1$ and the HPO procedure outputs $\pi_0$, because it should be near-optimal.
On the other hand, if $s_t=-1, \forall t =1,2,\ldots,T$, meaning that $\pi_0$ is always worse than $\pi_t$ in a significant level, then $\alpha_T = 0$ and the HPO procedure does not imitate the logging policy at all, because it should be a bad policy.

\subsection{The CIR-HPO Algorithm}
Algorithm~\ref{algo:ss_hpo_ht} describes the CIR-HPO algorithm, which leverages conservative surrogate objective (lines 6 and 9) and adaptive imitation regularization (line 5). $\delta$ and $\gamma$ are \textit{meta hyperparameters}. $\delta$ controls how conservative we would like to be during HPO, and $\gamma$ controls the scheduling of the adaptive regularization. In Section~\ref{sec:experiment}, we show that these configurations have some impact on the behavior of CIR-HPO, but we also demonstrate that the default values ($\delta=0.1$ and $\gamma=0.01$) work reasonably well in a range of experiment settings. The other inputs are the same as those of Algorithm~\ref{algo:naive_hpo}. Note that our algorithm is easy to implement with a few additional lines of code and there is no additional computational overhead compared to the typical procedure in Algorithm~\ref{algo:naive_hpo}.

\section{Empirical Evaluation}\label{sec:experiment}
This section empirically compares \textbf{Baseline} (Algorithm~\ref{algo:naive_hpo}) and \textbf{CIR-HPO} (Algorithm~\ref{algo:ss_hpo_ht}), employing the same synthetic data and policy class as in Section~\ref{sec:empirical_analysis}.
Note that we compare CIR-HPO against only \textbf{Baseline} because there is no other method proposed for HPO using logged bandit data (comprehensive summary of related work can be found in Appendix B).

\subsection{Baseline vs CIR-HPO}
Figure~\ref{fig:baseline_vs_chpo} compares the performance of \textbf{Baseline} and \textbf{CIR-HPO} with varying logging policies ($\beta_0 \in \{0,3,20\}$). First, when $\beta_0=0$ where the logging policy is uniform random, both Baseline and CIR-HPO work reasonably well and succeed in finding a set of hyperparameters that leads to a policy much better than the logging policy. What is notable for this setting is that CIR-HPO is inefficient and slow to converge compared to Baseline due to adaptive imitation regularization, even though it reaches far above the black horizontal line ($V(\pi_0)$). At the initial stage of HPO, we do not know how close the logging policy is to the optimal policy. Therefore, the proposed procedure gradually learns the optimality of the logging policy, potentially leading to a slower convergence if the logging policy is far from optimal (such as uniform random). Next, when $\beta_0=3$ where the logging policy is better than uniform random, but is not close to the optimal, both Baseline and CIR-HPO slightly improve the logging policy. However, the confidence intervals indicate that CIR-HPO is much more stable than Baseline. In particular, Baseline is much more likely to \textit{underperform} the logging policy, even though it outperforms the logging policy on average. Finally, when $\beta_0=20$ where the logging policy is near-optimal, Baseline outputs a solution that is substantially worse than the logging policy, even though it starts from the logging policy as its initial solution. In contrast, CIR-HPO learns that the logging policy is near-optimal during HPO and strengthens the imitation regularization adaptively. As a result, it prevents the solution from being significantly worse than the (already near-optimal) logging policy, which is compelling, because we do not know the optimality of the logging policy in advance. Figure~\ref{fig:air} illustrates the behavior of adaptive imitation regularization, which suggests that it succeeds in controlling the strength of regularization depending on the optimality of the logging policy.

\begin{figure}[th]
\centering
\includegraphics[width=8.5cm]{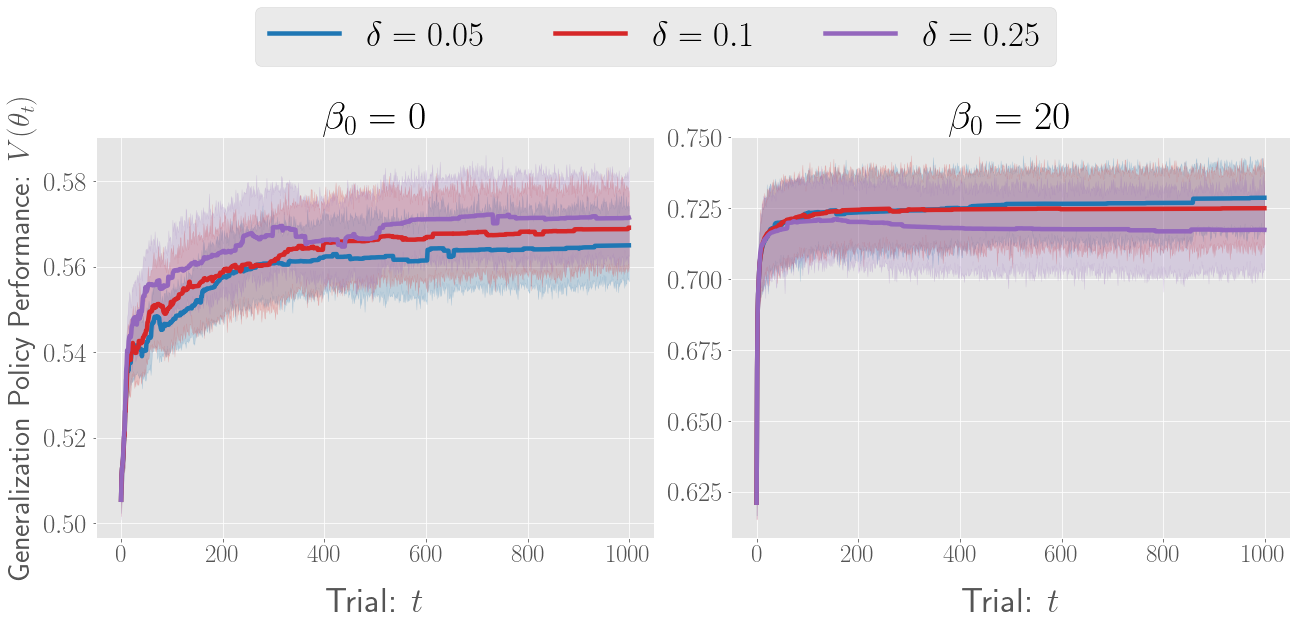}
\vspace{-2mm}
\caption{Sensitivity of the generalization performance of CIR-HPO regarding the choice of $\delta$.}
\label{fig:varying_delta}
\end{figure}

\begin{figure}[th]
\centering
\includegraphics[width=8.5cm]{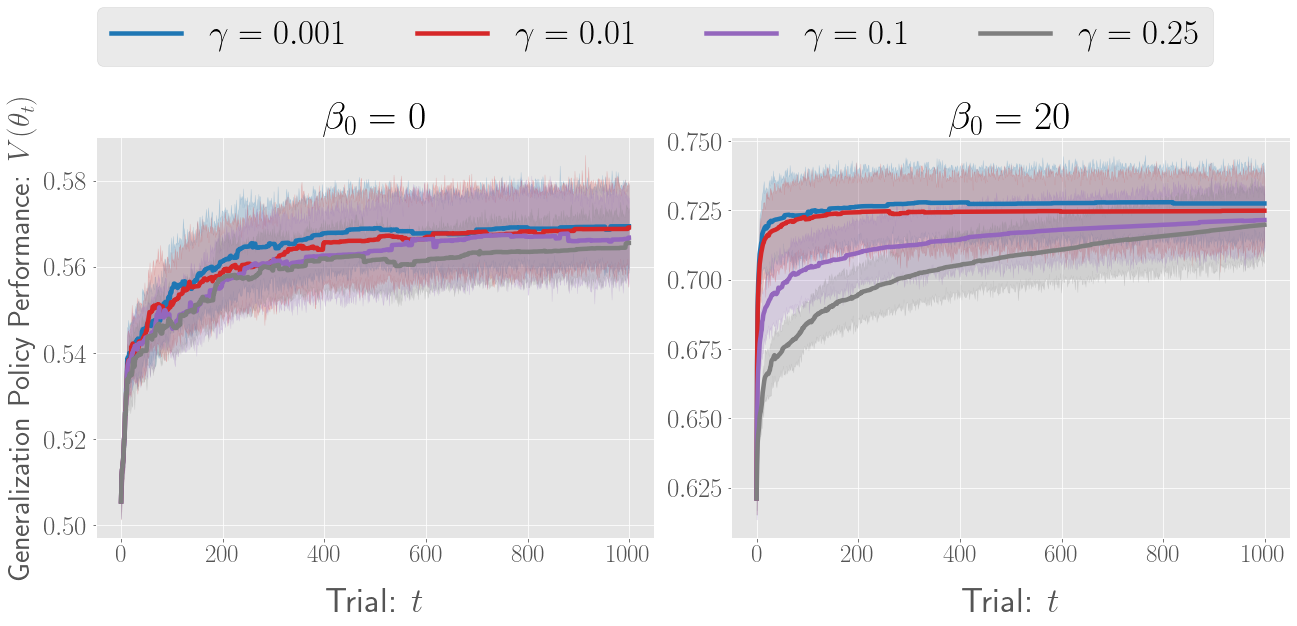}
\vspace{-2mm}
\caption{Sensitivity of the generalization performance of CIR-HPO regarding the choice of $\gamma$.}
\label{fig:varying_gamma}
\end{figure}

\begin{figure}[th]
\centering
\includegraphics[width=8cm]{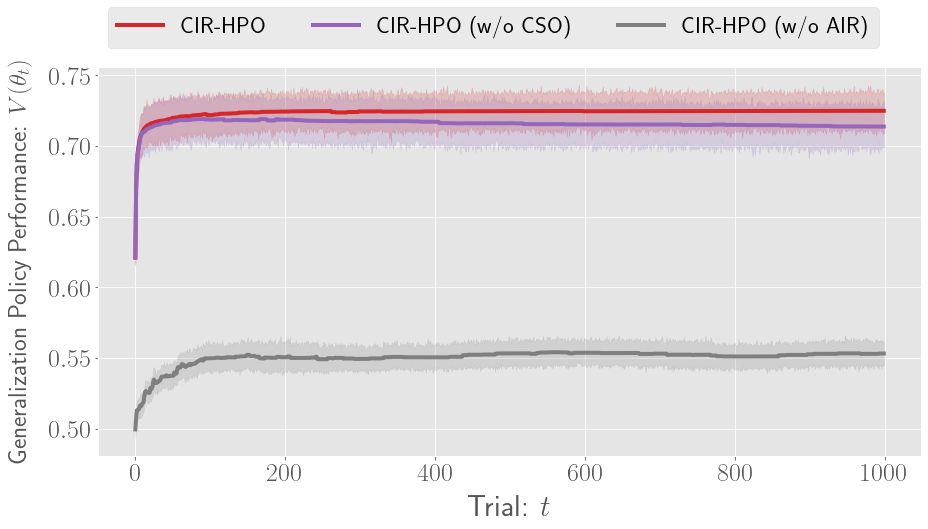}
\vspace{-2mm}
\caption{Ablation study of CIR-HPO ($\beta_0 = 20$).}
\label{fig:ablation}
\end{figure}
\subsection{Choice of Meta Hyperparameters}
Next, we evaluate the sensitivity of CIR-HPO to the choice of its meta hyperparameters. Figure~\ref{fig:varying_delta} shows that the effectiveness of CIR-HPO with different values of $\delta$. The result demonstrates that there is no significant difference among the three values, suggesting that we do not have to care too much about which value to use for $\delta$. In addition, Figure~\ref{fig:varying_gamma} evaluates different values of $\gamma$, which controls the scheduling of adaptive imitation regularization. This result implies that, for a sub-optimal logging policy ($\beta_0=0$), the choice of $\gamma$ has no significant effect on the behavior of CIR-HPO. For a near-optimal logging policy ($\beta_0=20$), however, a smaller $\gamma$ leads to a faster convergence, although all values achieve the same level of performance in the final stage.

\subsection{Ablation Study}
We also conduct an ablation study to evaluate the contribution of \textbf{conservative surrogate objective (CSO)} and \textbf{adaptive imitation regularization (AIR)} to the effectiveness of CIR-HPO. To this end, we compare \textbf{CIR-HPO} to \textbf{CIR-HPO (w/o CSO)} and \textbf{CIR-HPO (w/o AIR)} in Figure~\ref{fig:ablation}. The result demonstrates that both CSO and AIR clearly contribute to the performance of CIR-HPO, while AIR has a more appealing effect (CSO and AIR provide 1.6\% and 23.6\% improvements, respectively, in terms of the final generalization performance).

\subsection{A Real-World Experiment}
In addition to the synthetic experiments, we apply CIR-HPO to the Open Bandit Dataset~\cite{saito2021open}, a publicly available logged bandit dataset collected on a large-scale fashion e-commerce platform. The results suggest that CIR-HPO leads to a better policy compared to the Baseline procedure in terms of the generalization performance, providing a further argument regarding its real-world applicability. The experiment detail and results can be found in Appendix A.

\section{Conclusion}
This work studies the effectiveness of the typical HPO procedure in the OPL setup from both empirical and theoretical perspectives and found that it can fail and even be harmful. In particular, we investigated two surprising issues, namely optimistic bias and unsafe behavior, and showed that a heavy-tailed distribution of overestimation can cause an unexpected gap between validation and generalization. In response, we made two extremely simple corrections to the typical HPO procedure, resulting in the CIR-HPO algorithm, to deal with the issues. Extensive experiments demonstrated that CIR-HPO can be advantageous, particularly when the conventional procedure collapses and causes a significant and undetectable deterioration in the generalization performance.

\bibliographystyle{named}
\bibliography{main.bbl}

\clearpage
\clearpage
\appendix

\begin{table}[t]
\caption{Test policy values of the policies tuned by Baseline and CIR-HPO in the real-world experiment.} \label{tab:real}
\centering
\scalebox{1.1}{\begin{tabular}{c|cc}
\hline
  & IPS & DR \\ \hline
Baseline & 5.199 $\times 10^{-3}$ & 5.128 $\times 10^{-3}$ \\ 
CIR-HPO (ours) & \textbf{5.214 $\times 10^{-3}$} & \textbf{5.303 $\times 10^{-3}$}  \\ 
\hline
\end{tabular}}
\end{table}

\section{A Real-World Experiment}

\subsection{Dataset.}

To assess the real-world applicability of our CIR-HPO, here we evaluate it on the Open Bandit Dataset (OBD)\footnote{https://research.zozo.com/data.html}~\cite{saito2021open}, a publicly available logged bandit dataset collected on a large-scale fashion e-commerce platform. We use 100,000 observations that are randomly sub-sampled from the ``Men's" campaign data of OBD. The dataset contains user contexts $x$, fashion items to recommend as action $a \in \calA$ where $|\calA|=240$, and resulting clicks as reward $r \in \{0,1\}$.

The dataset consists of two sets of logged bandit data collected by two different policies (uniform random and Thompson sampling) during an A/B test of these policies. We regard Thompson sampling as a logging policy and perform HPO of a policy class defined in Section~\ref{sec:empirical_analysis}. We then approximate the ground-truth performance of the tuned policies on the test dataset collected by uniform random. Note that we use the same policy class defined in Section~\ref{sec:empirical_analysis} and the default meta-parameters of CIR-HPO described in Section~\ref{sec:experiment}. All the experiments were conducted on MacBook Pro (2.4 GHz Intel Core i9, 64 GB).

\subsection{Result.}

Table~\ref{tab:real} reports the results (averaged over 5 runs with different random seeds) of the real-world experiment. We compare \textbf{Baseline} and \textbf{CIR-HPO (ours)} combined with IPS and DR as OPE estimators to provide a surrogate objective (i.e., $\hat{V}(\theta; \calD)$). The results suggest that, for both estimators, CIR-HPO outperforms Baseline in terms of the test policy value. This observation provides further arguments for the applicability of our CIR-HPO.

\section{Related Work} \label{app:related}

\paragraph{Off-Policy Evaluation and Learning.}
The basis of our study lies in OPE, which is interested in accurately estimating the generalization policy performance from logged bandit data.
This has been one of the most fundamental problems in contextual bandits and RL, with applications ranging from recommender systems~\cite{saito2021counterfactual,mcinerney2020counterfactual,kiyohara2022doubly,saito2022off,saito2020unbiased,saito2020pairwise,saito2022towards,saito2021evaluating} to personalized medicine~\cite{tang2021model,kallus2021optimal,saito2022counterfactual}.
The most common solution in OPE is to use IPS weighting. IPS provides an unbiased estimate of the policy performance. However, there is a canonical criticism that IPS often suffers from a high variance due to a low overlap~\cite{dudik2014doubly,wang2016optimal}.
Thus, alternative estimators have been explored to reduce the variance without introducing large bias. 
For example, Self-Normalized IPS (SNIPS)~\cite{swaminathan2015self} aims to reduce the variance of IPS as follows.
\begin{align*}
    \snips := \frac{1}{\sum_{i=1}^n \frac{\pi_e(a_i | x_i)}{\pi_0 (a_i | x_i)}} \sum_{i=1}^n  \frac{\pi_e(a_i | x_i)}{\pi_0 (a_i | x_i)}  r_i.
\end{align*}
This estimator normalizes the IPS estimate by the sum of the importance weights ($\sum_{i=1}^n \frac{\pi_e(a_i | x_i)}{\pi_0 (a_i | x_i)}$) to gain stability.
Moving forward, DR leverages a control variate to provide an \textit{efficient} OPE. The DR estimator is defined as follows.
\begin{align*}
    \dr := \frac{1}{n} \sum_{i=1}^n \hat{\mu}(x_i, \pi_e) + \frac{\pi_e(a_i | x_i)}{\pi_0 (a_i | x_i)}  (r_i-\hat{\mu}(x_i, a_i) ),
\end{align*}
where $\hat{\mu}(x,\pi_e)$ estimates $\mE_{\pi_e}[\mu(x,a)]$. This estimator is still unbiased and consistent if either the importance weight or the reward estimator is true or consistent. In addition, DR is \textit{efficient} in that it reaches the lowest achievable asymptotic variance if the reward estimator is correctly specified. There have also been much efforts to further improve DR in a finite sample setting such as Switch~\cite{wang2016optimal}, More Robust Doubly Robust~\cite{farajtabar2018more}, and Shrinkage~\cite{su2020doubly}.

Instead, OPL is the task of improving the decision making policies using only logged bandit data collected from a logging policy~\cite{swaminathan2015batch}. The optimal policy maximizes the generalization performance, i.e., $\pi^* := \argmax_{\pi} V(\pi)$. However, this problem is intractable because we cannot know the generalization performance. This raises the need for applying an estimator for its careful approximation, as done in the empirical risk minimization of supervised ML. A typical estimator choice for OPL is IPS~\cite{swaminathan2015batch,ma2019imitation,joachims2018deep} or its variants~\cite{swaminathan2015self}. A problem is that the variance issue arises here again. Thus, research has been centered around adding regularization to deal with the variance issue during policy training. The fundamental method is variance regularization, which penalizes the policy whose variance in the performance estimation is high~\cite{swaminathan2015batch}. Other regularization methods include imitation regularization~\cite{ma2019imitation} and behavior regularization~\cite{wu2019behavior,kumar2019stabilizing}. \cite{jeunen2021pessimistic} explore the optimistic bias in OPL, and propose a pessimistic reward modeling for OPL based on a Bayesian uncertainty estimation. Instead, we focus on investigating and alleviating the optimistic bias in HPO and empirically illustrate the unsafe behavior, which is specific to our HPO setup.

\paragraph{Off-Policy Selection.}
Off-Policy Selection (OPS) is a sub-field of OPE and OPL and is closely related to our HPO setting. This is the task of identifying the best policy out of a given \textit{finite} set of candidate policies using only logged bandit data. We can view this selection problem as a special case of OPL, where the policy class $\Pi$ is finite. \cite{kuzborskij2021confident} study OPS in the contextual bandit setting. They develop a confident OPS procedure, which is based on an Efron-Stein high probability lower bound of the policy performance derived from SNIPS. \cite{yang2020offline} study OPS in RL and propose BayesDICE for estimating the brief over the performance of the candidate policies, which is useful for the selection task. \cite{doroudi2018importance} theoretically characterize a failure of IPS in OPS. Specifically, \cite{doroudi2018importance} show that naively applying IPS to OPS can result in an \textit{unfair} selection in the sense that the procedure can select the worst of the two candidate policies more often than not.
\cite{paine2020hyperparameter} and \cite{fu2020benchmarks} conduct empirical studies on OPS of RL polices for several benchmark control tasks. They identify Fitted Q Evaluation as a useful strategy for OPS in RL.
\cite{tang2021model} also provide an empirical study on OPS of RL policies and propose to combine multiple OPE estimators for an accurate and scalable OPS.

Although these studies on OPS are closely related, our contributions are unique in several ways. First, we focus on HPO, not OPS, which adaptively finds better hyperparameter solutions given a certain budget. By paying attention to HPO, our empirical analysis succeeded in finding that the TPE algorithm, a popular adaptive method in HPO, cannot improve the generalization performance of OPL. This is our unique finding, not captured by previous studies targeting only OPS. Second, we provide a theoretical analysis about the optimistic bias and the gap in generalization and validation regret, explaining the empirical observations. Although \cite{paine2020hyperparameter} point out the overestimation bias in the context of OPS, they provide no theoretical explanation. Finally, we propose CIR-HPO based on our empirical observations and analysis. This procedure is specific to the adaptive optimization process and is non-trivial given any existing studies on OPS.

\paragraph{Hyperparameter Optimization (HPO).}
HPO is a critical element for the success of a range of machine learning algorithms and tasks~\cite{feurer2019hyperparameter}.
For instance, hyperparameter configurations can entirely change the performance of deep neural networks~\cite{dacrema2019we,henderson2018deep,lucic2018gans}.
A typical formulation regards HPO as a black-box optimization problem, where the input is a set of hyperparameters, and the output is a validation performance (an accessible proxy of the generalization performance). 
Among many black-box optimization methods, Bayesian optimization (BO)~\cite{brochu2010tutorial,shahriari2015taking,frazier2018tutorial}, such as Gaussian process bandit algorithms~\cite{srinivas2010gaussian} and tree-structured Parzen estimator (TPE)~\cite{bergstra2011algorithms} have gained particular attention.
These methods sequentially optimize the hyperparameters of a prediction model by leveraging the previous evaluation results to sample the next set of hyperparameters to evaluate.
More specifically, previous evaluation results are used to train a surrogate to model the relationship between hyperparameters and the resulting prediction accuracy. Then, the algorithms balance the exploration and exploitation based on an acquisition function, such as expected improvement and upper confidence bound. Because of the sample efficiency, BO demonstrates a state-of-the-art performance with a limited budget~~\cite{turner2021bayesian}.
It should be noted that, while this study focuses on BO, our discussion can be applied to other optimization methods such as CMA-ES~\cite{hansen2016cma,nomura2024cmaes}, whose efficiency is verified in multiple HPO tasks~\cite{loshchilov2016cma,nomura2021warm}.

A critical convention in HPO research is to evaluate the performance and efficiency of algorithms based solely on the validation performance.
This implies that there is an implicit and often neglected assumption that optimizing the validation performance is a reasonable strategy for optimizing the generalization performance (primary objective). However, it is unclear whether optimizing the validation performance really improves the generalization performance. In fact, Section~\ref{sec:analysis} sheds light on the fact that ignoring this assumption in OPL can lead to an unexpected failure and a substantial validation-generalization gap. Our theoretical and empirical illustrations might also contribute to a broader HPO community, as there are few studies verifying whether naively setting the validation performance as a surrogate objective is reasonable, given the goal of optimizing the generalization performance.

\section{Omitted Proofs} \label{app:proof}
This section provides proofs omitted in the main text.

\subsection{Proof of Proposition~\ref{prop:optimistic_bias}}

\begin{proof}
Given that $\theta^*$ and $\hat{\theta}$ are defined in Eq.~\eqref{eq:opt_hypara} and Eq.~\eqref{eq:hpo}, we have that 
\begin{align*}
    & \mE \left[\hat{V} (\hat{\pi}(\cdot \mid \cdot, \hat{\theta}, \calD_{tr}); \calD_{val} ) \right] \\
    &\ge \mE \left[ \hat{V} \left(\hat{\pi}(\cdot \mid \cdot, \theta^*, \calD_{tr}); \calD_{val} \right) \right] 
    = V \left(\hat{\pi}(\cdot \mid \cdot, \theta^*, \calD_{tr}) \right),
\end{align*}
where the last equation follows, as $\theta^*$ does not depend on $\calD_{val}$.
Similarly, the right inequality of Eq.~\eqref{eq:optimistic_bias} comes from the fact that $\theta^*$ is optimal in terms of the true generalization policy performance.
\end{proof}

\subsection{Proof of Proposition~\ref{prop:regret}}
\begin{proof}
Our derivation is inspired by the regret analysis provided in \cite{nomura2021efficient}.
Given the notations introduced in Section~\ref{sec:theoretical_analysis}, it follows that
\begin{align*}
&r_{gen}(T; A, \calD) \\
& = V \left(\theta^*\right) - V ( \hat{\theta}_{T,A} (\calD) ) \\
    &= \underbrace{V \left(\theta^*\right) - \ipssimple(\theta^*; \calD)}_{= - \tau(\theta^*; \calD)} + \ipssimple(\theta^*; \calD) - V ( \hat{\theta}_{T,A} (\calD) ) \\
    &= - \tau(\theta^*; \calD) + \underbrace{(- V(\hat{\theta}_{T,A} (\calD)) + \ipssimple(\hat{\theta}_{T,A} (\calD); \calD) )}_{= \tau(\hat{\theta}_{T,A}(\calD); \calD)} \\
    &\qquad - \ipssimple(\hat{\theta}_{T,A}(\calD); \calD) + \ipssimple(\theta^*; \calD) \\
    &= \Delta \tau(\hat{\theta}_{T,A}(\calD), \theta^*) + \underbrace{(\ipssimple(\hat{\theta}^*; \calD) - \ipssimple(\hat{\theta}_{T,A}(\calD); \calD))}_{= r_{val}(T; A, \calD)} \\
    &\qquad     
    + \underbrace{(\ipssimple(\theta^*; \calD) - \ipssimple(\hat{\theta}^*; \calD))}_{= C} \\
    &= r_{val}(T; A, \calD) + \Delta \tau(\hat{\theta}_{T,A}(\calD), \theta^*; \calD) + C.
\end{align*}
\end{proof}

\section{Additional Theoretical Result}
We suppose $\hat{V}(\theta)$ has the following form:
\begin{align*}
    \hat{V}(\theta) = \frac{1}{n}\sum_{i=1}^n v(a_i, x_i; \theta).
\end{align*}
Note that this form is general and encompasses common estimators.
For example, we can obtain IPS estimator by setting $v(a_i, x_i; \theta) = \frac{1}{n}\sum_{i=1}^n \frac{\pi_e(a_i | x_i; \theta)}{\pi_0 (a_i | x_i)} r_i$.
The following inequality suggests that the optimistic bias $\hat{V} \big(\hat{\theta}(\calD); \calD \big)
     - V \left( \theta^* \right)$ decreases at the order $\mathcal{O}\left( 1 / \sqrt{n} \right)$ as the data increases.

\begin{proposition}
    Suppose $|\Theta| < \infty$ and $0 \leq v(a,x,\theta) \leq 1$ for all $a \in \calA, x \in \calX, \theta \in \Theta$. For $\delta \in (0, 1)$, the following inequality holds with probability as least $1 - \delta$:
    \begin{align*}
        \hat{V} \big(\hat{\theta}(\calD); \calD \big) 
     - V \left( \theta^* \right) \leq \sqrt{\frac{1}{2n}\log\frac{|\Theta|}{\delta}} \in \mathcal{O}\left( \frac{1}{\sqrt{n}} \right).
    \end{align*}
\end{proposition}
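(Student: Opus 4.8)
The plan is to control $\hat{V}(\hat\theta(\calD);\calD) - V(\theta^*)$ by inserting $V(\hat\theta(\calD))$ as an intermediate quantity and exploiting the optimality of $\theta^*$ for the true value. Concretely, write
\begin{align*}
    \hat{V}\big(\hat\theta(\calD);\calD\big) - V(\theta^*)
    &= \underbrace{\Big(\hat{V}\big(\hat\theta(\calD);\calD\big) - V\big(\hat\theta(\calD)\big)\Big)}_{\text{estimation error at }\hat\theta}
    + \underbrace{\Big(V\big(\hat\theta(\calD)\big) - V(\theta^*)\Big)}_{\le\, 0}.
\end{align*}
The second bracket is nonpositive because $\theta^* = \argmax_{\theta\in\Theta} V(\theta)$, so it suffices to bound the first bracket. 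Since $\hat\theta(\calD)$ is data-dependent, I cannot apply a one-sample concentration bound directly; instead I would bound the first bracket by the uniform deviation $\max_{\theta\in\Theta}\big(\hat{V}(\theta;\calD) - V(\theta)\big)$.

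Next I would bound this maximum by a union bound over the finite set $\Theta$. For each fixed $\theta$, the estimator $\hat{V}(\theta;\calD) = \frac1n\sum_{i=1}^n v(a_i,x_i;\theta)$ is an average of $n$ i.i.d.\ terms lying in $[0,1]$ (by the assumption $0\le v \le 1$), and by unbiasedness $\mE[\hat{V}(\theta;\calD)] = V(\theta)$. Hoeffding's inequality then gives, for any $\epsilon>0$, $\mP\big(\hat{V}(\theta;\calD) - V(\theta) > \epsilon\big) \le \exp(-2n\epsilon^2)$. Applying the union bound over the $|\Theta|$ hyperparameters and setting the total failure probability to $\delta$, i.e.\ $|\Theta|\exp(-2n\epsilon^2) = \delta$, yields $\epsilon = \sqrt{\tfrac{1}{2n}\log\tfrac{|\Theta|}{\delta}}$. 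Hence with probability at least $1-\delta$, $\max_{\theta\in\Theta}\big(\hat{V}(\theta;\calD) - V(\theta)\big) \le \sqrt{\tfrac{1}{2n}\log\tfrac{|\Theta|}{\delta}}$, and combining with the decomposition above gives the claimed bound; the $\mathcal{O}(1/\sqrt{n})$ statement is then immediate since $|\Theta|$ and $\delta$ are fixed.

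I do not anticipate a serious obstacle here — the argument is the standard "uniform convergence over a finite hypothesis class" template. The only points requiring a little care are: (i) making sure the bracket $V(\hat\theta(\calD)) - V(\theta^*)$ is genuinely nonpositive, which relies on $\theta^*$ optimizing the \emph{true} value $V$ over exactly the same search space $\Theta$ (this matches the definition in Eq.~\eqref{eq:opt_hypara}); (ii) noting we only need the one-sided (upper tail) Hoeffding bound, not a two-sided one, which saves a factor of $2$ inside the logarithm; and (iii) checking that $v(a_i,x_i;\theta)$ is indeed a per-sample i.i.d.\ contribution — this is exactly the form assumed in the statement, and the IPS instantiation $v(a_i,x_i;\theta) = \frac{\pi_e(a_i|x_i;\theta)}{\pi_0(a_i|x_i)} r_i$ fits it provided the importance-weighted reward is bounded in $[0,1]$ (a normalization the proposition takes as hypothesis).
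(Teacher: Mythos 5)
Your proposal is correct and follows essentially the same route as the paper's proof: the same decomposition isolating the nonpositive term $V(\hat\theta(\calD)) - V(\theta^*)$, followed by a union bound over the finite $\Theta$ and the one-sided Hoeffding inequality, then solving $|\Theta|e^{-2n\epsilon^2}=\delta$ for $\epsilon$. No gaps.
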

\begin{proof}
We first decompose the optimistic bias as
\begin{align*}
    &\hat{V} \big(\hat{\theta}(\calD); \calD \big) 
     - V \left( \theta^* \right) \\
     &= \hat{V} \big(\hat{\theta}(\calD)
     ; \calD \big) - V(\hat{\theta}(\calD)) + \underbrace{V(\hat{\theta}(\calD))
     - V \left( \theta^* \right)}_{\leq 0} \\
     &\leq \hat{V} \big(\hat{\theta}(\calD); \calD \big) - V(\hat{\theta}(\calD)).
\end{align*}

Hence, for $\epsilon > 0$,
\begin{align*}
&\mP \left( \hat{V} \big(\hat{\theta}(\calD); \calD \big) 
     - V \left( \theta^* \right) \geq \epsilon \right) \\
&\leq \mP \left( \hat{V} \big(\hat{\theta}(\calD); \calD \big) - V(\hat{\theta}(\calD)) \geq \epsilon \right) \\
&\leq \mP \left( \bigcup_{\theta \in \Theta} \left\{ \hat{V} \big(\theta(\calD); \calD \big) - V(\theta(\calD)) \geq \epsilon \right\} \right) \\
&\leq \sum_{\theta \in \Theta} \mP \left( \hat{V} \big(\theta(\calD); \calD \big) - V(\theta(\calD)) \geq \epsilon \right) \\
&\leq |\Theta| e^{-2n\epsilon^2}.
\end{align*}
We used the union bound and Hoeffding's inequality\footnote{By replacing Hoeffding's inequality with Chebyshev's inequality, we can obtain a weaker result even if the boundedness of $v(a,x,\theta)$ is not assumed.}.
Putting the RHS as $\delta$ and solving it for $\epsilon$ completes the proof.
\end{proof}

\begin{figure*}[ht]
\centering
\includegraphics[width=16.5cm]{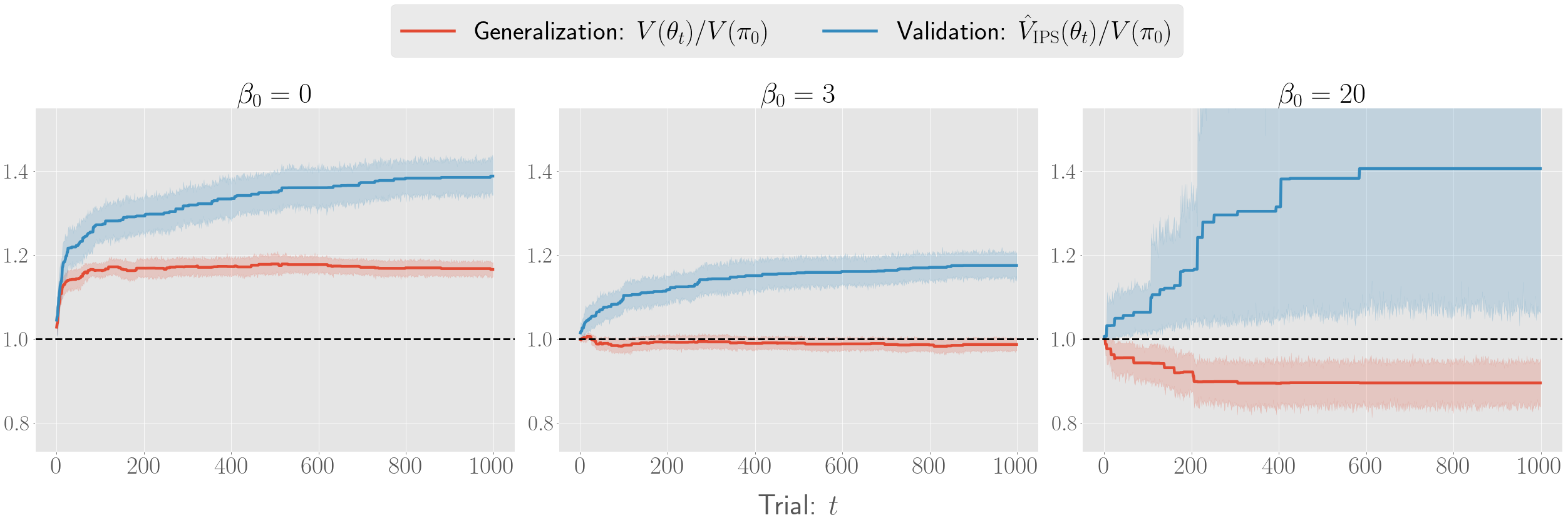}
\caption{Empirical Evidence of Optimistic Bias and Unsafe Behavior in HPO for OPL (w/ Random Search and the IPS estimator). The results are averaged over 25 runs with different seeds and then normalized by $V(\pi_0)$. The shaded regions indicate 95\% confidence intervals.}
\label{fig:optimistic_bias_random}
\end{figure*}

\begin{figure*}[ht]
\centering
\includegraphics[width=16.5cm]{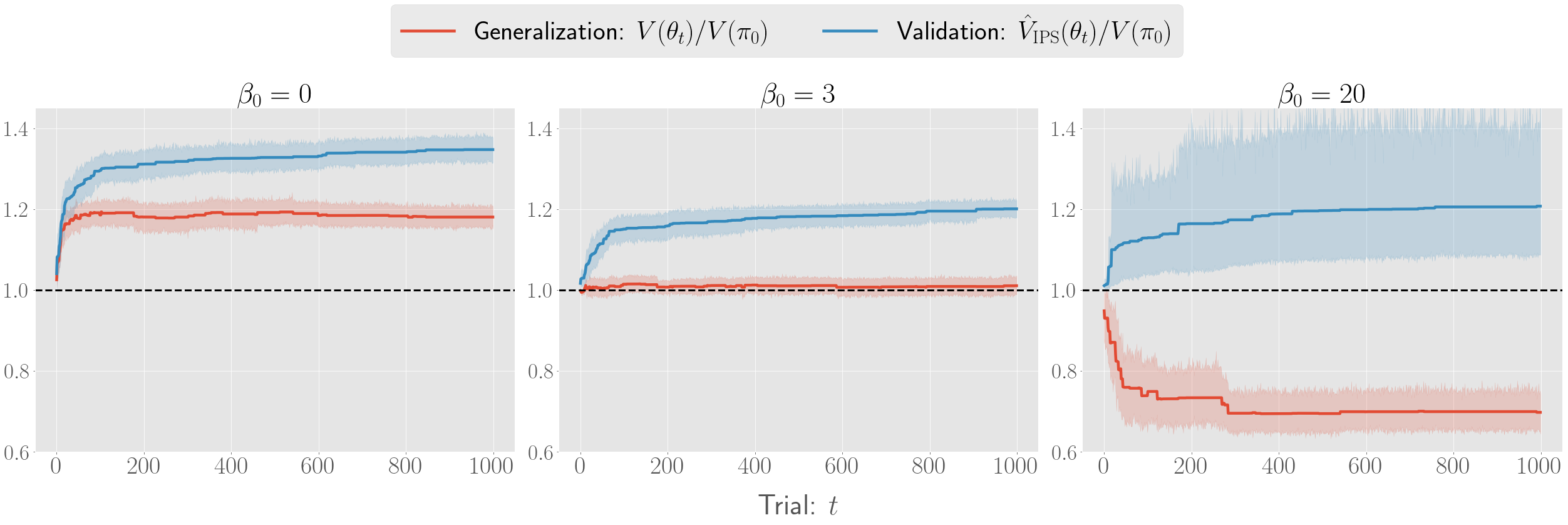}
\caption{Empirical Evidence of Optimistic Bias and Unsafe Behavior in HPO for OPL (w/ TPE and the DR estimator). The results are averaged over 25 runs with different seeds and then normalized by $V(\pi_0)$. The shaded regions indicate 95\% confidence intervals.}
\label{fig:optimistic_bias_dr}
\end{figure*}

\begin{figure*}[ht]
\centering
\begin{minipage}{1.0\textwidth}
    \begin{center}
        \begin{tabular}{c}
            \begin{minipage}{0.32\hsize}
                \begin{center}
                    \includegraphics[clip, width=6cm]{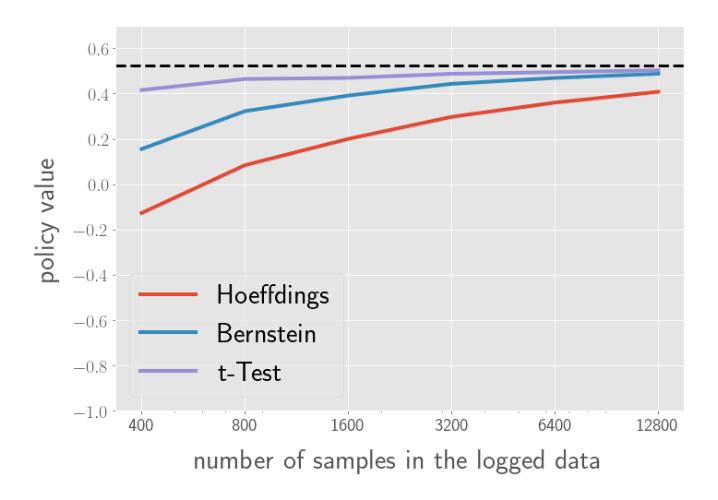}\\
                   $\beta_0 = 0$
                \end{center}
            \end{minipage}
            
            \begin{minipage}{0.32\hsize}
                \begin{center}
                    \includegraphics[clip, width=6cm]{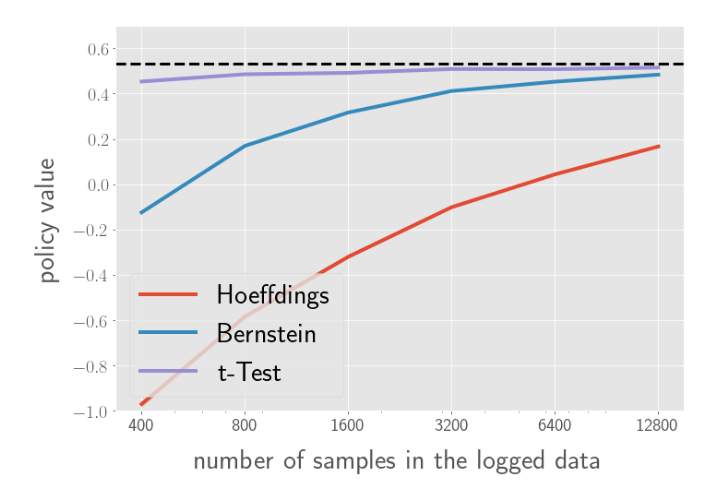}\\
                    $\beta_0 = 3$
                \end{center}
            \end{minipage}
            
            \begin{minipage}{0.32\hsize}
                \begin{center}
                    \includegraphics[clip, width=6cm]{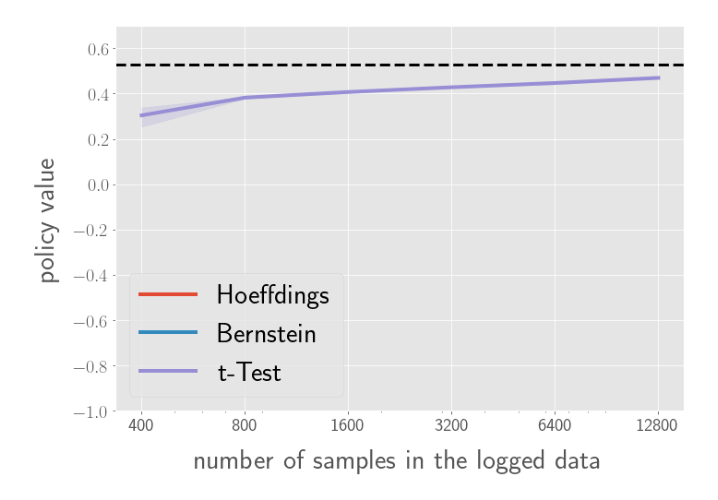}\\
                    $\beta_0 = 20$
                \end{center}
            \end{minipage}
        \end{tabular}
    \end{center}
    \caption{High Probability Lower Bound}
\label{fig:lower_bound}
\end{minipage}
\end{figure*}

\begin{figure*}[ht]
\centering
\begin{minipage}{1.0\textwidth}
    \begin{center}
        \begin{tabular}{c}
            \begin{minipage}{0.32\hsize}
                \begin{center}
                    \includegraphics[clip, width=6cm]{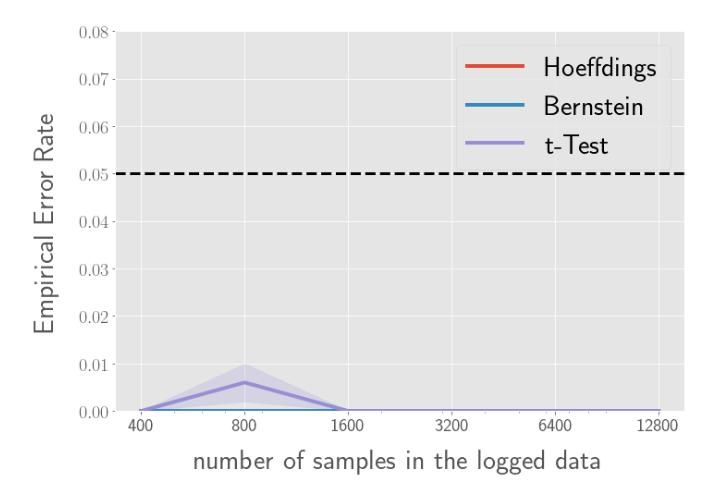}\\
                    $\beta_0 = 0$
                \end{center}
            \end{minipage}
            
            \begin{minipage}{0.32\hsize}
                \begin{center}
                    \includegraphics[clip, width=6cm]{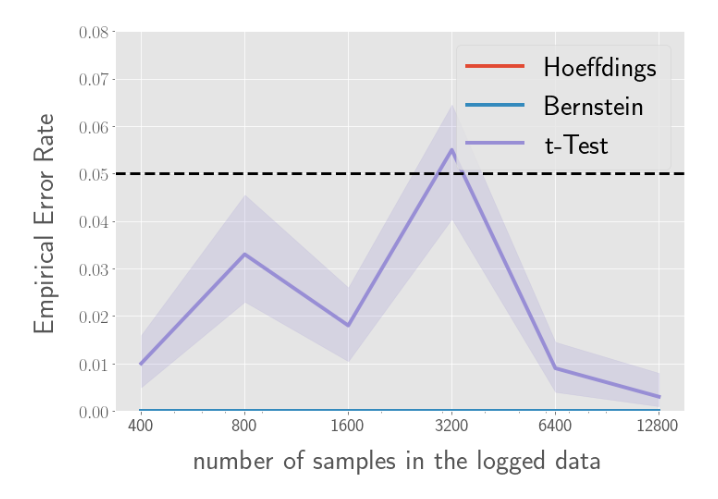}\\
                    $\beta_0 = 3$
                \end{center}
            \end{minipage}
            
            \begin{minipage}{0.32\hsize}
                \begin{center}
                    \includegraphics[clip, width=6cm]{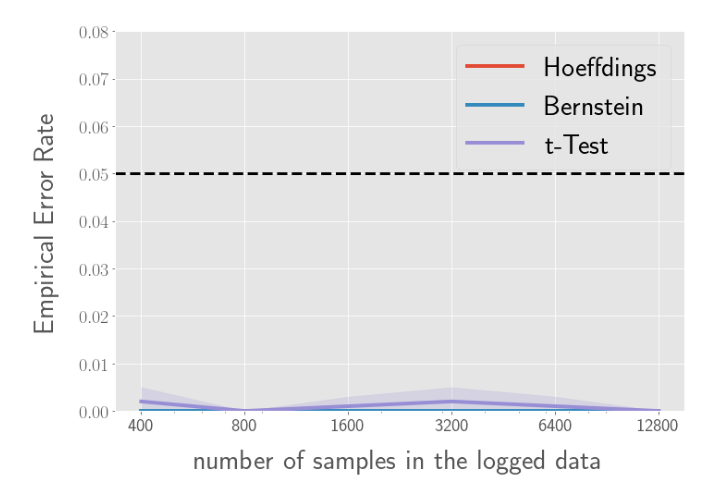}\\
                    $\beta_0 = 20$
                \end{center}
            \end{minipage}
        \end{tabular}
    \end{center}
    \caption{Empirical Error Rate}
\label{fig:error_rate}
\end{minipage}
\end{figure*}

\section{Supplemental Simulations} \label{app:simulation}
This section empirically evaluates the confidence lower bound of OPE based on concentration inequalities (Hoeffding and Bernstein) and a Student’s t-test. We follow Section~\ref{sec:analysis} to generate synthetic bandit data. We vary the value of $\beta_0$ within the range of $\{0,3,20\}$, and the number of validation data $|\calD_{val}|$ within the range of $\{400,800,1600,3200,6400,12800\}$. We also follow Section~\ref{sec:analysis} to train the evaluation policy $\pi_e$. Specifically, we first train $\hat{\mu}$ using logistic regression and form a stochastic policy based on Eq.~\eqref{eq:synthetic_eval} with $\beta=10$.

We use IPS and estimate a high probability lower bound of $V(\pi_e)$ based on Hoeffding, Bernstein, and t-Test. Given a confidence level $\delta \in (0,1)$, the estimated lower bounds are given as $\ipssimple (\pi_e; \calD_{val}) - f(\delta, \calD_{val})$ where 
\begin{align*}
    \textbf{Hoeffding}: &f(\delta, \calD_{val}) = w_{\max} \sqrt{\frac{2 \log(2/\delta)}{n}}, \\ 
    \textbf{Bernstein}: &f(\delta, \calD_{val}) = \sqrt{\frac{2 \log(2/\delta)\hat{\mV}(\ipssimple (\pi_e; \calD_{val}))}{n-1}}\\
    &\qquad \qquad \qquad + \frac{7w_{\max}\log(2/\delta)}{3(n-1)},\\
    \textbf{t-Test}: &f(\delta, \calD_{val}) = t_{1-\delta,\nu} \sqrt{\frac{\hat{\mV}(\ipssimple (\pi_e; \calD_{val}))}{n-1}}.
\end{align*}
Note that $n=|\calD_{val}|$ and $w_{\max}:= \sup_{(x,a) \in \calX \times \calA } \pi_e(a|x)/\pi_b(a|x)$. $t_{1-\delta,\nu}$ is the $100(1-\delta)$ percentile of the Student’s t distribution with $\nu$ degrees of freedom. The lower bound given by the t-test is based on the assumption that $(\pi_e(a|x)/\pi_0(a|x))r$ is normally distributed.

Figure~\ref{fig:lower_bound} shows the estimated lower bounds with varying $\beta$ and sample size, and with a fixed $\delta=0.05$.
The black horizontal line represents the ground-truth policy value $V(\pi_e)$. We observe that the lower bound given by t-Test is the tightest, while those by Hoeffding and Bernstein are invisible when $\beta_0=20$, as they are too loose. Bernstein is always tighter than Hoeffding, but t-Test is even better, particularly when $\beta=3,20$ where the logging policy is near-deterministic ($w_{\max}$ is large).

Next, Figure~\ref{fig:error_rate} shows how frequently the estimated lower bounds fail to lower bound $V(\pi_e)$. Here, we say that a lower bound fails, if $\ipssimple (\pi_e; \calD_{val}) - f(\delta, \calD_{val}) \ge V(\pi_e)$. The black horizontal line represents the allowed error rate $\delta$. We observe that, in all scenarios, the bounds given by Hoeffding and Bernstein have an error rate of $0$, even if they are allowed to produce an error rate of $\delta$, meaning that these lower bounds are overly conservative. In contrast, the lower bound given by t-Test makes some errors, but the error rate is around $\delta$. Although the normality assumption might fail in OPE with small sample sizes, we empirically verify that t-Test produces a lower bound tighter than those of Hoeffding and Bernstein, and its error rate is around the allowed value.

\begin{table*}[ht] \label{tab:search_space}
\centering
\begin{minipage}{0.5\textwidth}
\centering
\caption{Hyperparameter search space ($\Theta$)}
\def\arraystretch{1.2}
\scalebox{1.05}{
\begin{tabular}{lll}
\hline
\textbf{Hyperparameters} & \textbf{Search Spaces} &  \\ \hline
$\beta$ & $[0.01, 100]$ &  \\ 
$\hat{\mu}$ & \{`LR', `RF'\} &  \\
\hline
\end{tabular}}
\end{minipage} \\
\begin{minipage}{0.5\textwidth}
\centering
\def\arraystretch{1.2}
\vspace{0.25in}
\scalebox{1.05}{
\begin{tabular}{lll}
\hline
\textbf{Machine Learning Models} & \textbf{Search Spaces} &  \\ \hline
\multirow{2}{*}{Logistic Regression (LR)} & $C \in [10^{-3}, 10^{3}]$ & \\
& $\text{l1\_ratio} \in \{0.1,0.2,\ldots,0.9\}$ &  \\ \hline
\multirow{3}{*}{Random Forest (RF)} & $\text{max\_depth} \in \{2,3,\ldots,32\}$ & \\
& $\text{min\_samples\_split} \in \{2, 3, \ldots, 32\}$ & \\ 
& $\text{max\_samples} \in \{0.1, 0.2, \ldots, 0.9\}$ & \\  
\hline
\end{tabular}}
\end{minipage}
\vskip 0.1in
\raggedright
\fontsize{9pt}{9pt}\selectfont \textit{Note}: 
The names of the hyperparameters correspond to those specified by the \textit{scikit-learn} package. For other hyperparameters, we use `sklearn.ensemble.RandomForectClassifier(n\_estimators=10)' and `sklearn.linear\_model.LogisticRegression(max\_iter=1000, penalty="elasticnet", solver="saga")'.
\end{table*}

\begin{table*}[ht] \label{tab:pi_b_optimality}
\centering
\vspace{0.15in}
\caption{Generalization performance and optimality of $\pi_0$ with varying $\beta_0$}
\def\arraystretch{1.2}
\scalebox{1.1}{
\begin{tabular}{c|cccccc}
\hline
& $\beta_0=-3$ & $\beta_0=0$ & $\beta_0=3$ & $\beta_0=10$ & $\beta_0=20$ \\ \hline
$V(\pi_0)$ & 0.412  & 0.501 & 0.580 & 0.677 & 0.719 \\
$V(\pi_0)/V(\pi^*)$ & 0.554 & 0.673 & 0.831 & 0.910 & 0.966 \\
\hline
\end{tabular}}
\vskip 0.1in
\centering
\fontsize{9pt}{9pt}\selectfont \textit{Note}: 
$V(\pi^*)$ is the best achievable performance in our data generating process. $V(\pi_0)/V(\pi^*)$ indicates the optimality of logging policy $\pi_0$.
\end{table*}

\end{document}